\documentclass[conference]{IEEEtran}
\IEEEoverridecommandlockouts
% The preceding line is only needed to identify funding in the first footnote. If that is unneeded, please comment it out.
%Template version as of 6/27/2024

\usepackage{cite}
\usepackage{amsmath,amssymb,amsfonts}
\usepackage{algorithmic}
\usepackage{graphicx}
\usepackage{textcomp}
\usepackage{xcolor}

\def\BibTeX{{\rm B\kern-.05em{\sc i\kern-.025em b}\kern-.08em
    T\kern-.1667em\lower.7ex\hbox{E}\kern-.125emX}}

%%%%%%%%%% USER DEFS %%%%%%%%%%

\usepackage{amssymb}
\usepackage{amsthm}
\usepackage{mathtools}
\usepackage[short]{optidef}
\usepackage{mathrsfs}
\usepackage{algorithm}
\usepackage{algorithmic}
\usepackage{multicol}
\usepackage{enumerate}
\usepackage{mathtools}

\newtheorem{theorem}{Theorem}

\newtheorem{assumption}{Assumption}
\theoremstyle{remark}
\newtheorem{remark}{Remark}

\newcommand{\norm}[1]{\left\lVert#1\right\rVert}
\newcommand{\mc}[1]{\mathcal{#1}}
\newcommand{\mb}[1]{\mathbb{#1}}
\newcommand{\ms}[1]{\mathscr{#1}}
\newcommand{\E}{\mathbb{E}}
\DeclarePairedDelimiter{\abs}{\lvert}{\rvert}

%%% Define relation symbol labeling
\newcounter{relctr} %% <- counter for relations
\everydisplay\expandafter{\the\everydisplay\setcounter{relctr}{0}} %% <- reset every eq
 %% <- label format

\newcommand\labelrel[2]{%
  \begingroup
    \refstepcounter{relctr}%
    \stackrel{\textnormal{(\alph{relctr})}}{\mathstrut{#1}}%
    \originallabel{#2}%
  \endgroup
}
\AtBeginDocument{\let\originallabel\label}

%%% LHS equation labeling
\makeatletter
\newcommand{\leqnomode}{\tagsleft@true\let\veqno\@@leqno}
\makeatother

%%%%%%%%%%%%%%%%%%%%%%%%%%%%%%%%%%%

\begin{document}

\title{Signal attenuation enables scalable decentralized multi-agent reinforcement learning over networks\\
% \thanks{\todo{This work was supported by\ldots}}
}

\author{\IEEEauthorblockN{Wesley A. Suttle}
\IEEEauthorblockA{\textit{Army Research Laboratory} \\
\textit{U.S. Army DEVCOM}\\
Adelphi, MD USA \\
wesley.a.suttle.ctr@army.mil}
\and
\IEEEauthorblockN{Vipul K. Sharma}
\IEEEauthorblockA{\textit{Industrial Engineering Dept.} \\
\textit{Purdue University}\\
West Lafayette, IN USA \\
sharm697@purdue.edu}
\and
\IEEEauthorblockN{Brian M. Sadler}
\IEEEauthorblockA{\textit{Oden Institute} \\
\textit{University of Texas, Austin}\\
Austin, TX USA \\
brian.sadler@ieee.org}
}

\maketitle

\begin{abstract}
Multi-agent reinforcement learning (MARL) methods typically require that agents enjoy global state observability, preventing development of decentralized algorithms and limiting scalability. Recent work has shown that, under assumptions on decaying inter-agent influence, global observability can be replaced by local neighborhood observability at each agent, enabling decentralization and scalability. Real-world applications enjoying such decay properties remain underexplored, however, despite the fact that signal power decay, or signal attenuation, due to path loss is an intrinsic feature of many problems in wireless communications and radar networks. In this paper, we show that signal attenuation enables decentralization in MARL by considering the illustrative special case of performing power allocation for target detection in a radar network. To achieve this, we propose two new constrained multi-agent Markov decision process formulations of this power allocation problem, derive local neighborhood approximations for global value function and policy gradient estimates and establish corresponding error bounds, and develop decentralized saddle point policy gradient algorithms for solving the proposed problems. Our approach, though oriented towards the specific radar network problem we consider, provides a useful model for extensions to additional problems in wireless communications and radar networks.
\end{abstract}

\begin{IEEEkeywords}
multi-agent reinforcement learning, radar networks, target detection, power allocation
\end{IEEEkeywords}

\section{Introduction} \label{sec:introduction}

Multi-agent reinforcement learning (MARL) has seen immense attention in recent years, from both theoretical \cite{zhang2018fully, suttle2020multi, kosaraju2021reinforcement, zhang2021multi} and experimental \cite{hernandez2019survey, gronauer2022multi, zhu2024survey} perspectives. Due to limitations of the underlying multi-agent Markov decision process (MDP) model, however, standard methods for MARL in networked systems require global state observability at each agent \cite{zhang2021multi}. This inhibits the development of truly decentralized MARL methods where each agent only needs access to information from its local neighborhood, thereby limiting the scalability of such methods and preventing their application to realistic problems. Fortunately, recent works \cite{qu2020scalable, qu2022scalable, zhang2024scalable, shibl2025scalable} on scalable, decentralized MARL have shown that, for problems where inter-agent influence decays sufficiently quickly as the distance between agents increases over their communication network, use of only local neighborhood information at each agent suffices to approximately solve the global problem. This enables the development of truly decentralized methods that scale well as the number of agents increases. However, despite the theoretical advantages of these methods, real-world applications where the prerequisite decay properties hold remain largely unexplored.  

Signal power decay, or signal attenuation, due to path loss is a well-known property of wireless communications \cite{goldsmith2005wireless} and radar systems \cite{richards2005fundamentals}. In problems where multiple agents are widely dispersed over a geographic region, such as radar networks \cite{haykin2006cognitive}, path loss naturally leads to decay of inter-agent influence as distance between agents increases.
For example, when the performance metric at each agent is a function of the power of received signals, such as the signal-to-interference-plus-noise ratio (SINR), performance measurements at a given agent are largely decoupled from behavior of other agents that are sufficiently far away over the network.
Due to this inherent property of these systems, wireless communications and radar networks provide promising candidates for real-world application of scalable, decentralized MARL methods like \cite{qu2020scalable, qu2022scalable, zhang2024scalable, shibl2025scalable} that rely on such decay properties for success.

In this paper, we examine the implications of signal attenuation for the development of scalable, decentralized MARL approaches to the specific problem of performing power allocation for target detection in a radar network.
Radar networks are attractive for performing target detection and tracking due to advantages arising from their spatial dispersion and potential signal variety \cite{deligiannis2017bayesian}. When determining power allocations in radar networks, maximizing power leads to improved signal strength, but this conflicts with the need to achieve low probability of intercept (LPI) and abide by resource constraints \cite{shi2017power}. Existing methods for LPI power allocation for target detection in radar networks are centralized in that they require global observability and global coordination between radars \cite{shi2017power, deligiannis2017bayesian, snow2023identifying}, rendering them impractical in large networks for similar reasons to the MARL methods discussed above.

In this work, we propose a MARL approach for performing decentralized power allocation for LPI target detection in radar networks that mitigates these drawbacks. This is achieved by leveraging the signal attenuation inherent in radar networks to replace global observability and coordination with local observability and coordination among neighboring radars. Specifically, our contributions are as follows: (i) we propose two new constrained multi-agent MDP formulations of the problem of power allocation for target detection in radar networks; (ii) we leverage signal attenuation properties inherent in our setting to derive local approximations of the policy gradient expressions used in our algorithms and rigorously establish error bounds on these approximations; (iii) we propose novel decentralized, policy gradient ascent-descent algorithms for approximately solving the proposed problems. Though we focus on radar networks in this work, our approach can likely be extended to a broad range of applications in wireless communication and radar networks.
\section{Problem Formulation} \label{sec:formulation}
In this section, we first describe our system model for a
radar network with widely separated radars, a flying target, and extended clutter. To enable LPI target detection and tracking in this setting, we subsequently propose two different problem formulations: (i) maximizing sum-of-SINRs subject to regional power constraints; (ii) minimizing power consumption subject to a minimal SINR threshold and regional power constraints.

\subsection{Radar network model} \label{subsec:radar_network_model}
\begin{figure*}[htp]
    \centering
    \includegraphics[width=0.85\linewidth]{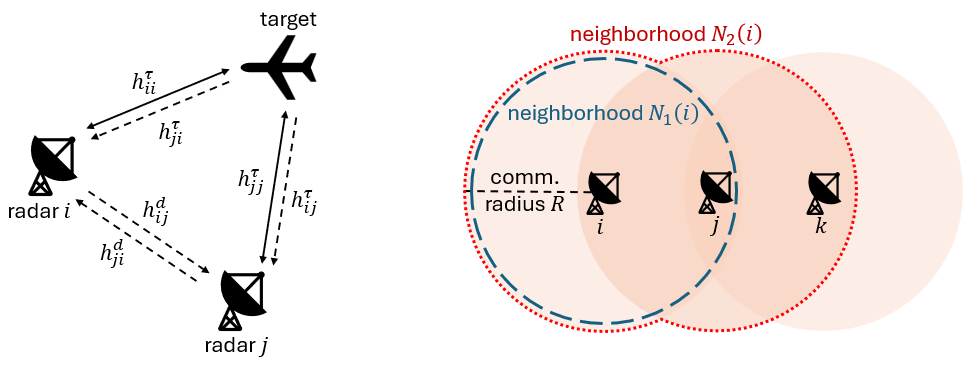}
    \caption{Illustrations of radar network system model of Section \ref{subsec:radar_network_model} (left) and communication neighborhoods of Section \ref{subsec:cmamdp} (right).}
    \label{fig:radar_network}
\end{figure*}
We consider a radar network composed of a set $\mc{N} = \{ 1, \ldots, n \}$ networked radars. In the presence of a target, the received signal for radar $i \in \mc{N}$ is given by \cite{deligiannis2017bayesian, shi2017power}
\begin{equation}
    x^i = \alpha^i \sqrt{a^i} y^i + \sum_{i \in \mc{N} \setminus \{ i \}} \beta^{ji} \sqrt{a^j} y^j + \omega^i,
\end{equation}
where $y^i = \psi^i z^i$ describes the transmitted signal from radar $i$ and $z^i = \left[ 1 \quad e^{j 2 \pi f_{D, i}} \quad \ldots \quad e^{j 2 \pi (N - 1) f_{D, i}} \right]^T$ is the Doppler steering vector of radar $i$ associated with the desired target, $f_{D, i}$ denotes the normalized Doppler shift as seen by radar $i$, $N$ is the number of received pulses at each timestep, and $\psi^i$ denotes the predesigned waveform transmitted from radar $i$. Furthermore, the parameter $\alpha^i$ denotes the desired channel gain in the target direction, $a^i$ denotes the transmission power of radar $i$, $\beta^{ji}$ describes the cross-channel gain between radars $i$ and $j$, and $\omega^i$ denotes zero-mean white Gaussian noise.

We assume that $\alpha^i \sim CN(0, h^{\tau}_{ii})$, $\beta^{ij} \sim CN(0, c_{ij} (h^{\tau}_{ij} + h^d_{ij}))$, and $\omega^i \sim CN(0, (\sigma^i)^2)$ for an $i$-dependent $\sigma^i > 0$, where $c_{ij} h^{\tau}_{ij}$ represents the variance of the channel gain for the radar $i$-target-radar $j$ path, $c_{ij} h^d_{ij}$ represents the variance of the channel gain for the direct radar $i$-radar $j$ path, and $c_{ij}$ denotes the cross-correlation coefficient between the $i$th and $j$th radars. Note that $c_{ii} = 1$, so the variance of the channel gain for the radar $i$-target-radar $i$ path is simply $h^{\tau}_{ii}$. The variances are given by the radar range equations \cite{richards2005fundamentals}
\begin{multicols}{2}
    \noindent
    \begin{equation}
        h^{\tau}_{ij} = \frac{ G_t G_r \sigma^{RCS}_{ij} \lambda^2 }{ (4 \pi)^3 R_i^2 R_j^2 }, \label{eqn:radar_range_1}
    \end{equation}
    \begin{equation}
        h^d_{ij} = \frac{ G_t' G_r' \lambda^2 }{ (4 \pi)^2 d_{ij}^2 }, \label{eqn:radar_range_2}
    \end{equation}
\end{multicols}
\noindent
where $G_t$ and $G_r$ are the radar main-lobe transmitting and receiving gains, $G_t'$ and $G_r'$ are the radar side-lobe transmitting and receiving gains, $\sigma_{ij}^{RCS}$ is the radar cross section (RCS) of the target between the $i$th and $j$th radars, $\lambda$ denotes the wavelength, $R_i$ denotes the distance between radar $i$ and the target, and $d_{ij}$ denotes the distance between radars $i$ and $j$. The signal-to-interference-plus-noise ratio (SINR) of the signal received at radar $i$ is then given by
\begin{equation}
    SINR_i = \frac{ h^{\tau}_{ii} a^i }{ (\sigma^i)^2 + \sum_{j \in \mc{N} \setminus \{i\}} c_{ji} \left( h^d_{ji} + h^{\tau}_{ji} \right) a^j }, \label{eqn:SINR_static}
\end{equation}
where $\sigma^i$ corresponds to the noise received at radar $i$. See Figure \ref{fig:radar_network} for an illustration of the radar network system model.
\subsection{Constrained multi-agent MDP formulations} \label{subsec:cmamdp}

To capture the underlying system model described above, we use a constrained multi-agent Markov decision process (CMAMDP) $(\mc{S}, \mc{A}, p, \mc{N}, \{ r^i \}_{i \in \mc{N}}, \{ c^i \}_{i \in \mc{N}} )$, defined below.
Let the set of radars $\mc{N} = \{ 1, \ldots, n \}$ correspond to the $n$ radars.
Let $\mc{S} = \mc{S}^1 \times \ldots \times \mc{S}^n$ denote the joint state space, where the $i$th component $\mc{S}^i \subset \mb{R}^k \times \mb{R}^m$ corresponds to set of possible locations and movements of the target and radar $i$, i.e., $s^i = (s^i_{target}, s^i_{radar})$ with $s^i_{target} \in \mb{R}^k$ and $s^i_{radar} \in \mb{R}^m$. When the target is moving in $\mb{R}^3$ and the radars are moving in $\mb{R}^2$, for example, we may take $k = 9$ and $m = 6$ and let $s^i_{target}$ and $s^i_{radar}$ correspond to the positions, velocities, and accelerations of the target and radar, respectively. Notice that, for all $i, j \in \mc{N}$, we have $s^i_{target} = s^j_{target}$.
Given the joint state $s_t \in \mc{S}$ at time $t$, we assume that the Doppler steering vectors, predesigned waveforms, and channel gain variances corresponding to radar $i$ are provided and that the goal of radar $i$ is to determine an appropriate transmission power level to supply given its local state information $s^i_t$. To capture this, let $\mc{A} = \mc{A}^1 \times \ldots \times \mc{A}^n$, where $\mc{A}^i = [0, a^{max}]$ denotes the set of possible power allocations at radar $i$, for a predetermined, finite, maximum power allocation $a^{max} > 0$ over all radars.

Let the transition dynamics $p : \mc{S} \times \mc{A} \rightarrow \mc{S}$ capture the movement of the target and radars over time. We assume in this paper that the movements of the target and radars, and therefore $p$, are independent of the transmission power allocations applied at the radars. Given joint state $s \in \mc{S}$ and joint power allocation $a \in \mc{S}$, let $r^i(s, a) = SINR_i(s, a)$ denote the SINR received at radar $i$ obtained from equation \eqref{eqn:SINR_static} by substituting the channel gains, cross-correlation coefficients, and noise corresponding to $s$ and applying allocation $a$, i.e.,
\small
\begin{equation}
    r^i(s, a) = \frac{ h^{\tau}_{ii}(s) a^i }{ (\sigma^i(s, a))^2 + \sum_{j \in \mc{N} \setminus \{i\}} c_{ji}(s) \left( h^d_{ji}(s) + h^{\tau}_{ji}(s) \right) a^j }, \label{eqn:SINR_dynamic}
\end{equation}
\normalsize
where we abuse notation to let $h^{\tau}_{ii}(s), h^{\tau}_{ji}(s), h^d_{ji}(s), c_{ji}(s), \sigma^i(s, a)$, and $\sigma^i_{\kappa}(s, a)$ correspond to the expressions appearing in \eqref{eqn:SINR_static} when the system is in state $s$ and joint action $a$ is selected.
Similarly, define the local neighborhood reward by
\footnotesize
\begin{equation}
    r^i_{\kappa}(s, a) = \frac{ h^{\tau}_{ii}(s) a^i }{ (\sigma^i_{\kappa}(s, a))^2 + \sum_{j \in \mc{N}_{\kappa}(i) \setminus \{ i \}} c_{ji}(s) \left( h^d_{ji}(s) + h^{\tau}_{ji}(s) \right) a^j }, \label{eqn:SINR_dynamic_kappa}
\end{equation}
\normalsize
which captures the SINR received at radar $i$ originating within neighborhood $\mc{N}_{\kappa}(i)$, where $\sigma_{\kappa}^i(s, a)$ denotes the noise originating within $\mc{N}_{\kappa}(i)$. The expression in \eqref{eqn:SINR_dynamic_kappa} will be crucial in the theoretical results of Section \ref{sec:decentralization}.
Finally, let the cost $c^i(s, a)$ denote the cost to radar $i$ of applying power level $a^i$. We might simply take $c^i(s, a) = a^i$, for example, but our approach accommodates general cost structures.

Let a fixed communication radius $R > 0$ between the radars be given, and denote the undirected communication graph between radars in state $s \in \mc{S}$ by $\mc{G}(s) = (\mc{N}, \mc{E}(s))$, where $\mc{E}(s) = \{ (i, j) \ | \ d_{ij}(s) \leq R \}$ and $d_{ij}(s)$ denotes the Euclidean distance between radars $i$ and $j$ when the system is in state $s$. We note that $R$ defines a user-specified communications neighborhood, but that it does not necessarily represent a hard limit on communications within the network. We assume in this paper that the edge set $\mc{E} = \mc{E}(s)$ remains constant, i.e., that any movement of the radars leaves the topology of the communication network unchanged, and will henceforth suppress the dependence on $s$ and simply write $\mc{G} = (\mc{N}, \mc{E})$. For a given positive integer $\kappa \in \mb{N}^+$, let $\mc{N}_{\kappa}(i)$ denote the $\kappa$-hop neighborhood of radar $i$ with respect to $\mc{G}$. Note that, since $\mc{G}$ is undirected, the $\kappa$-hop neighborhood relation is symmetric, i.e., $i \in \mc{N}_{\kappa}(j)$ if and only if $j \in \mc{N}_{\kappa}(i)$. Finally, let $\mc{N}_{\kappa}^{-1}(i) = \mc{N} \setminus \mc{N}_{\kappa}(i)$ denote the set of all radars outside $i$'s $\kappa$-hop neighborhood. See Figure \ref{fig:radar_network} for an illustration of the communication neighborhoods induced by a specified communication radius $R$ on a simple example.
%
%
%
% \begin{figure}
%     \centering
%     \includegraphics[width=0.85\linewidth]{images/network_fig.png}
%     \caption{\todo{}}
%     \label{fig:network_illustration}
% \end{figure}

Let $\kappa \in \mb{N}^+$ be fixed, and assume each radar $i$ has access to the state information $s^{\mc{N}_{\kappa}(i)} \in \mc{S}^{\mc{N}_{\kappa}(i)} = \{ s^j \ | \ j \in \mc{N}_{\kappa}(i) \}$. To each $i$, let there be associated a parameterized policy class $\{ \pi^i_{\theta_i} : \mc{S}^{\mc{N}_{\kappa}(i)} \rightarrow \Delta(\mc{A}^i) \}_{\theta^i \in \Theta^i}$, where $\Theta^i \subseteq \mathbb{R}^d$ is the set of permissible policy parameters, for some positive integer $d$. Denote the induced joint policy by $\pi_{\theta}(a | s) = \prod_{i \in \mc{N}} \pi^i_{\theta^i}(a^i | s^{\mc{N}_{\kappa}(i)})$, where $\theta = \left[ (\theta^1)^T \ldots (\theta^n)^T \right]^T \in \Theta = \Theta^1 \times \ldots \times \Theta^n$ is the stacked vector of each radar's policy parameters. Given policy $\pi_{\theta}$, under this formulation,
\small
\begin{align}
    J_{r^i}(\theta) &= \lim_{T \rightarrow \infty} \frac{1}{T} \mb{E}_{\pi_{\theta}} \Big[ \sum_{t=0}^{T-1} r^i(s_t, a_t) \Big], \label{eqn:radar_average_reward} \\
    \quad J_{c^i}(\theta) &= \lim_{T \rightarrow \infty} \frac{1}{T} \mb{E}_{\pi_{\theta}} \Big[ \sum_{t=0}^{T-1} c^i(s_t, a_t) \Big] \label{eqn:radar_average_cost}
\end{align}
\normalsize
capture the average expected SINR achieved and average expected cost incurred, respectively, at radar $i$.
%
% \begin{equation}
%     J_{r^i}(\theta) = \lim_{T \rightarrow \infty} \frac{1}{T} \mb{E}_{\pi_{\theta}} \left[ \sum_{t=0}^{T-1} r^i(s_t, a_t) \right] \label{eqn:radar_average_reward}
% \end{equation}
% %
% captures the average expected SINR achieved at radar $i$ and
% %
% \begin{equation}
%     J_{c^i}(\theta) = \lim_{T \rightarrow \infty} \frac{1}{T} \mb{E}_{\pi_{\theta}} \left[ \sum_{t=0}^{T-1} c^i(s_t, a_t) \right] \label{eqn:radar_average_cost}
% \end{equation}
% %
% captures the average expected cost incurred at radar $i$.

\subsubsection{Sum-of-SINRs Maximization} Fix $\kappa > 0$. The sum-of-SINRs maximization problem is as follows.
\begin{maxi*}
  {\theta \in \Theta}{ \sum_{i \in \mathcal{N}} J_{r^i}(\theta) }{}{}
    \addConstraint{ \sum_{j \in \mc{N}_{\kappa}(i) } J_{c^j}(\theta) }{ \leq u^i, }{ \hspace{3mm} \forall i \in \mc{N}, } \tag{$P^{max}_{\kappa}$} \label{opt:Q_kappa}
\end{maxi*}
The objective of this problem is to maximize the sum over the expected average SINRs over all radars while simultaneously ensuring that expected average cost does not exceed ``regional'' upper bounds at each radar.

\subsubsection{Power Minimization with SINR Threshold} Fix $\kappa > 0$ and let $\gamma_{min} > 0$ denote the minimum SINR value allowable at each radar, where we assume that $\gamma_{min}$ is chosen such that the thresholds at each radar in problem \eqref{opt:R_kappa} below are achievable. The problem of minimizing overall power consumption while respecting the SINR threshold is
\begin{mini*}
  {\theta \in \Theta}{ \sum_{i \in \mathcal{N}} J_{c^i}(\theta) }{}{}
    \addConstraint{ J_{r^i}(\theta) }{ \geq \gamma_{\min}, }{ \hspace{3mm} \forall i \in \mc{N}} \tag{$P^{min}_{\kappa}$} \label{opt:R_kappa}
    \addConstraint{ \sum_{j \in \mc{N}_{\kappa}(i) } J_{c^j}(\theta) }{ \leq u^i, }{ \hspace{3mm} \forall i \in \mc{N}. }
\end{mini*}
The objective of this problem is to minimize the expected average cost over all radars while simultaneously ensuring that both (a) each radar achieves a minimal SINR value, on average, and (b) expected average cost does not exceed ``regional'' upper bounds at each radar.
\section{Signal attenuation enables decentralization} \label{sec:decentralization}

In this section, we examine how signal decay inherent in the radar problem under consideration leads to natural decentralization in MARL solution approaches. We first recall the MARL global observability issue that prevents true decentralization in the general setting, then establish formal properties of our problem that lead to natural decentralization. We leverage the properties established in this section to develop decentralized MARL algorithms for solving problems \eqref{opt:Q_kappa} and \eqref{opt:R_kappa} in the next section.

We first review a few key concepts from RL. Let $r(s, a) = \sum_{i \in \mc{N}} r^i(s, a)$ denote the global reward and $c(s, a) = \sum_{i \in \mc{N}} c^i(s, a)$ the global cost. Define the global averages
\begin{equation}
    J_r(\theta) = \sum_{i \in \mc{N}} J_{r^i}(\theta), \qquad J_c(\theta) = \sum_{i \in \mc{N}} J_{c^i}(\theta).
\end{equation}
Fixing $\theta \in \Theta$, the action-value function corresponding to $\pi_{\theta}$ and the reward or cost function $f \in \{ r, c \}$ is given by
\small
\begin{align}
    Q^f_{\theta}&(s, a) = \mb{E}_{\pi_{\theta}} \left[ \sum_{t=0}^\infty f(s_t, a_t) - J_f(\theta) \ | \ s_0 = s, a_0 = a \right] \\
    &= \sum_{i \in \mc{N}} \mb{E}_{\pi_{\theta}} \left[ \sum_{t=0}^{\infty} f^i(s, a) - J_{f^i}(\theta) \ | \ s_0 = s, a_0 = a \right] \\
    &= \sum_{i \in \mc{N}} Q^{f^i}_{\theta}(s, a),
\end{align}
\normalsize
where the $Q$ function $Q^{f^i}_{\theta}(s, a)$ corresponding to $f^i$ is
\small
\begin{equation}
    Q^{f^i}_{\theta}(s, a) = \mb{E}_{\pi_{\theta}} \left[ \sum_{t=0}^{\infty} f^i(s, a) - J_{f^i}(\theta) \ | \ s_0 = s, a_0 = a \right],
\end{equation}
\normalsize
for the $i$th reward or cost $f^i \in \{ r^i, c^i \}$.

\subsection{MARL Global Observability Issue}
Ideally, we would like each agent $i$ to be able to compute its contribution $\nabla_{\theta^i} J_f(\theta)$ to the overall gradient $\nabla_{\theta} J_f(\theta)$ using only information from its local neighborhood $\mc{N}_{\kappa}(i)$, as this would make the development of decentralized algorithms easier. However, applying the classic policy gradient theorem \cite{sutton1999policy} to differentiate with respect to $\theta^i$ gives
\small
\begin{align}
    \nabla_{\theta^i} J_f(\theta) &= \mb{E}_{\pi_{\theta}} \left[ Q^f_{\theta}(s, a) \nabla_{\theta^i} \log \pi^i_{\theta^i}(a^i | s^i) \right] \\
    &= \mb{E}_{\pi_{\theta}} \left[ \sum_{j \in \mc{N}} Q^{f^j}_{\theta}(s, a) \nabla_{\theta^i} \log \pi^i_{\theta^i}(a^i | s^i) \right]. \label{eqn:global_obs}
\end{align}
\normalsize
From equation \eqref{eqn:global_obs}, global information is clearly required to estimate $\nabla_{\theta^i} J_f(\theta)$, since each of the $Q^{f^j}_{\theta}(s, a)$ requires access to the global state $s$ and joint action $a$, and agent $i$ needs access to the $Q$ functions of all other agents $j \in \mc{N}$. This is the crux of the global observability issue in policy gradient methods for MARL. To address this issue, we next identify properties of the radar problem we consider under which the term \small $\sum_{j \in \mc{N}} Q^{f^j}_{\theta}(s, a)$ \normalsize can be replaced by an approximation depending only on the information available to agent $i$ within its local neighborhood, $\mc{N}_{\kappa}(i)$.

\subsection{Signal Attenuation and Decentralization}

We start this section by stating several assumptions that will be needed in the subsequent analysis. Assumption \ref{assum:network_coverage} provides a mechanism for ensuring that the radar network provides adequate coverage of the region under consideration through appropriate choice of $g$. Assumption \ref{assum:pairwise_cost_independence} stipulates that the local costs at each radar are independent of the costs at all other radars. This is reasonable when costs depend only on local power consumption, for example. Assumption \ref{assum:regularity_conditions} provides minimal conditions ensuring that the radar range equations \eqref{eqn:radar_range_1}-\eqref{eqn:radar_range_2} lead to well-conditioned local rewards \eqref{eqn:SINR_dynamic}. Assumptions \ref{assum:uniform_ergodicity} and \ref{assum:lipschitz_score_functions} are commonly used in the RL \cite{wu2020finite, chen2023finite, suttle2023beyond} and MARL \cite{qu2020scalable, qu2022scalable} literatures to enable analysis of policy gradient-based methods. Assumption \ref{assum:bdd_interagent_gradients} bounds the effect that changes in a given radar's policy parameters can have on rewards received by radars outside its $\kappa$-hop neighborhood, which is reasonable when rewards decay as distance between agents increases. We suspect that Assumption \ref{assum:bdd_interagent_gradients} can be proven to hold for our setting, but leave this to future work.
\begin{assumption}[Network Coverage] \label{assum:network_coverage}
    There exists a function $g : \mathbb{N}^+ \times \mathbb{R}^+ \rightarrow [1, \infty)$, strictly increasing in both entries, such that, for all $i \in \mc{N}$, all $\kappa \in \mathbb{N}^+$, and any $j \in \mc{N}^{-1}_{\kappa}(i)$, we have $d_{ij}(s) \geq g(\kappa, R)$, for all $s \in \mc{S}$.
\end{assumption}
\begin{remark}
    When $\kappa = 1$, taking $g(\kappa, R) = R$ is a natural choice in Assumption \ref{assum:network_coverage}, since, for any $j \in \mc{N}^{-1}_{1}(i)$, we have $d_{ij}(s) \geq g(1, R) = R$ by definition of the neighbor relation in $\mc{G}$ (see Section \ref{subsec:cmamdp}). More generally, letting $g(\kappa, R) = \kappa R$ satisfies Assumption \ref{assum:network_coverage} for a large class of network topologies, such as when the radars are static and arranged in a grid topology with grid cell side lengths $R$.
    %
    % In general, without Assumption \ref{assum:network_coverage} and appropriate choice of $g$, it is possible to construct pathological network topologies where, for arbitrary $\kappa$, there exist radars $i, j \in \mc{N}$ simultaneously satisfying $d_{ij}(s) = 2R$ and $j \in \mc{N}^{-1}_{\kappa}(i)$.
\end{remark}
%
% \begin{remark}
%     When $\kappa = 1$, taking $g(\kappa, R) = R$ is a natural choice in Assumption \ref{assum:network_coverage}, since, for any $j \in \mc{N}^{-1}_{1}(i)$, we have $d_{ij}(s) \geq g(1, R) = R$ by definition of the neighbor relation in $\mc{G}$ (see Section \ref{subsec:cmamdp}). More generally, letting $g(\kappa, R) = \kappa R$ satisfies Assumption \ref{assum:network_coverage} for a large class of network topologies, such as when the radars are static and arranged in a grid topology with grid cell side lengths $R$. In general, appropriate choice of $g$ excludes pathological network topologies like that of Figure \ref{fig:pathological_network}, where, for arbitrary $\kappa$, there exist radars $i, j \in \mc{N}$ simultaneously satisfying $d_{ij}(s) = 2R$ and $j \in \mc{N}^{-1}_{\kappa}(i)$.
% \end{remark}
% %
% \begin{figure}
%     \centering
%     \includegraphics[width=0.85\linewidth]{images/pathological_network_kappa_no_eqns.drawio.png}
%     \caption{Example of a pathological network topology where $d_{ij} = 2R$, but which can be constructed such that $j \in \mc{N}^{-1}_{\kappa}(i)$, for any $\kappa$. This is excluded by Assumption \ref{assum:network_coverage} for appropriate choices of $g$, such as $g(\kappa, R) = \kappa R$.}
%     \label{fig:pathological_network}
% \end{figure}
%
%
\begin{assumption}[Pairwise Cost Independence] \label{assum:pairwise_cost_independence}
    For each $i \in \mc{N}$, we have the following two conditions: (i) $\nabla_{\theta^i} J_{c^j}(\theta) = 0$, for all $j \in \mc{N} \setminus \{ i \}$; (ii) $Q^{c^i}_{\theta}(s, a) = Q^{c^i}_{\theta}(s^i, a^i)$, i.e., the value of $Q^{c^i}$ depends on purely local state and action information.
\end{assumption}
\begin{assumption}[Regularity Conditions] \label{assum:regularity_conditions}
    For all $i, j \in \mc{N}$, we have $\inf_{s \in \mc{S}} R_i(s) \geq 1$, $\bar{\sigma}^{RCS} = \sup_{s \in \mc{S}} \sigma_{ij}^{RCS}(s) < \infty$ and $\bar{\sigma}^{RCS} > 0$, $\bar{c} = \sup_{s \in \mc{S}} c_{ji}(s) < \infty$ and $\bar{c} > 0$, $\underset{\bar{}}{\sigma} = \min \{ \inf_{s, a} \sigma^i(s, a), \inf_{s, a} \sigma_{\kappa}^i(s, a) \} > 0$, and $\lambda, G_t, G_r, G_t', G_r' > 0$.
\end{assumption}
\begin{assumption}[Uniform Ergodicity] \label{assum:uniform_ergodicity}
    There exist $\rho \in (0, 1)$ and $m \in \mathbb{R}^+$ such that every joint policy $\pi_{\theta}$ satisfies $d_{TV}(d_{\pi_{\theta}}^t( \cdot | s_0 ) || d_{\theta}(\cdot)) \leq m \rho^t$, for any $s_0 \in \mc{S}$ and for all $t \geq 0$, where $d_{TV}(q(\cdot) || q'(\cdot)) = \sup_{A} \abs{ \int_{A} q(x) \ dx - \int_{A} q'(x) \ dx }$ denotes the total variation distance between densities $q, q'$, and $d_{\pi_{\theta}}^t(\cdot | s_0)$ denotes the $t$-step state occupancy measure induced by $\pi_{\theta}$ over $\mc{S}$ given start state $s_0$.
\end{assumption}
\begin{assumption}[Lipschitz Score Functions] \label{assum:lipschitz_score_functions}
    For each $i \in \mc{N}$, there exists $L^i > 0$ such that $\norm{ \nabla_{\theta^i} \log \pi^i_{\theta^i}(a^i | s^i) } \leq L^i$, for all $s^i \in \mc{S}^i, a^i \in \mc{A}^i$.
\end{assumption}
\begin{assumption}[Bounded Inter-agent Gradients] \label{assum:bdd_interagent_gradients}
    There exists $\varepsilon_{\kappa} > 0$ such that, for each $i \in \mc{N}$ and all $j \in \mc{N}_{\kappa}^{-1}(i)$, we have $\norm{ \nabla_{\theta^i} J_{r^j}(\theta)} \leq \varepsilon_{\kappa}$, for all $\theta \in \Theta$.
\end{assumption}

We now establish properties of the radar problem enabling decentralized solution of the CMAMDP problem.
Due to the form of the costs and rewards coupled with properties of the SINR \eqref{eqn:SINR_static} and radar range equations \eqref{eqn:radar_range_1} and \eqref{eqn:radar_range_2}, the CMAMDP enjoys the following property.
\begin{theorem} \label{thm:radar_exp_decay}
Let Assumptions \ref{assum:network_coverage}, \ref{assum:pairwise_cost_independence}, \ref{assum:regularity_conditions}, and \ref{assum:uniform_ergodicity} hold. For any $\theta \in \Theta$, $i \in \mc{N}$, $s^{\mc{N}_{\kappa}(i)} \in \mc{S}^{\mc{N}_{\kappa}(i)}$, $a^{\mc{N}_{\kappa}(i)} \in \mc{A}^{\mc{N}_{\kappa}(i)}$, and $f^i \in \{ r^i, c^i \}$, we have
\small
\begin{align*}
    \Big| &Q_{\theta}^{f^i} \left( ( s^{\mc{N}_{\kappa}(i)}, s^{\mc{N}^{-1}_{\kappa}(i)} ), (a^{\mc{N}_{\kappa}(i)}, a^{\mc{N}^{-1}_{\kappa}(i)} ) \right) \\
    &- Q_{\theta}^{f^i} \left( ( s^{\mc{N}_{\kappa}(i)}, \bar{s}^{\mc{N}^{-1}_{\kappa}(i)} ), (a^{\mc{N}_{\kappa}(i)}, \bar{a}^{\mc{N}^{-1}_{\kappa}(i)} ) \right) \Big| \leq \frac{M | \mc{N}_{\kappa}^{-1}(i) |}{g^2(\kappa, R)},
\end{align*}
\normalsize
for all $s^{\mc{N}^{-1}_{\kappa}(i)}, \bar{s}^{\mc{N}^{-1}_{\kappa}(i)} \in \mc{S}^{\mc{N}^{-1}_{\kappa}(i)}$ and all $a^{\mc{N}^{-1}_{\kappa}(i)}, \bar{a}^{\mc{N}^{-1}_{\kappa}(i)} \in \mc{A}^{\mc{N}^{-1}_{\kappa}(i)}$, where $M > 0$ is a constant depending only on the quantities in Assumptions \ref{assum:regularity_conditions} and \ref{assum:uniform_ergodicity}.
\end{theorem}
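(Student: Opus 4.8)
The plan is to split the argument according to whether $f^i = c^i$ or $f^i = r^i$. The cost case is immediate: by Assumption~\ref{assum:pairwise_cost_independence}(ii) the value $Q^{c^i}_\theta(s,a) = Q^{c^i}_\theta(s^i,a^i)$ depends only on the local state and action of radar $i$, and since $i \in \mc{N}_\kappa(i)$ these coordinates are held fixed across the two arguments appearing in the bound, so the left-hand side is exactly $0$. The whole difficulty is therefore the reward case, and my strategy there is to compare the true reward $r^i$ in \eqref{eqn:SINR_dynamic} against the neighborhood reward $r^i_\kappa$ in \eqref{eqn:SINR_dynamic_kappa}, which by construction discards all interference and noise originating outside $\mc{N}_\kappa(i)$, and then to pass from a pointwise reward estimate to the stated bound on the differential value function using the average-reward structure together with the geometric mixing of Assumption~\ref{assum:uniform_ergodicity}.

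First I would establish the pointwise estimate $\norm{r^i - r^i_\kappa}_\infty \le C\,|\mc{N}_\kappa^{-1}(i)|/g^2(\kappa,R)$, with $C$ assembled from the quantities in Assumption~\ref{assum:regularity_conditions}. The two SINRs share the numerator $h^\tau_{ii}(s)\,a^i$, so their difference equals that numerator times the difference of reciprocal denominators; each denominator is bounded below by $\underset{\bar{}}{\sigma}^2 > 0$ and the numerator is bounded above (using $R_i \ge 1$ to cap $h^\tau_{ii}$), so it remains to bound the excess interference mass $\sum_{j \in \mc{N}_\kappa^{-1}(i)} c_{ji}\big(h^d_{ji} + h^\tau_{ji}\big)a^j$. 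Each summand is $O(1/g^2)$: the direct-path term satisfies $h^d_{ji} = O(1/d_{ij}^2) = O(1/g^2)$ by \eqref{eqn:radar_range_2} and Assumption~\ref{assum:network_coverage}, while for the target-path term $h^\tau_{ji} = O(1/(R_i^2 R_j^2))$ the crucial geometric observation is the triangle inequality $R_i + R_j \ge d_{ij} \ge g(\kappa,R)$, which forces $\max\{R_i,R_j\} \ge g(\kappa,R)/2$ and hence, together with $R_i,R_j \ge 1$, yields $R_i^2 R_j^2 \ge g^2(\kappa,R)/4$. Summing over the $|\mc{N}_\kappa^{-1}(i)|$ excluded radars gives the pointwise bound.

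To lift this to the value function I would insert $r^i_\kappa$ as an intermediary and write the target difference as the sum of (i) the differential value function of $r^i - r^i_\kappa$ evaluated at the first argument, (ii) $Q^{r^i_\kappa}_\theta$ at the first argument minus $Q^{r^i_\kappa}_\theta$ at the second, and (iii) the differential value function of $r^i_\kappa - r^i$ at the second argument. Terms (i) and (iii) are differential value functions of a single reward of sup-norm $O(|\mc{N}_\kappa^{-1}(i)|/g^2)$; expanding each as $\sum_{t \ge 0}\big(\E_{\pi_\theta}[(r^i-r^i_\kappa)(s_t,a_t) \mid s_0,a_0] - J\big)$ with $J$ the stationary average, and using that $p$ is action-independent, each time-$t$ summand is $O(\norm{r^i-r^i_\kappa}_\infty\, m\rho^t)$ by Assumption~\ref{assum:uniform_ergodicity}, so the geometric series is $O\big(\tfrac{m}{1-\rho}\,|\mc{N}_\kappa^{-1}(i)|/g^2\big)$. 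Term (ii) vanishes because $r^i_\kappa$ is a function of the neighborhood coordinates $(s^{\mc{N}_\kappa(i)},a^{\mc{N}_\kappa(i)})$ alone, the two initial conditions agree exactly on those coordinates, and the action-independent, spatially decoupled radar dynamics keep the law of the neighborhood sub-process---and thus every expectation of $r^i_\kappa$ along it---identical under the two initial conditions. Absorbing $C$, $m$, and $1/(1-\rho)$ into a single constant $M$ then gives the claim.

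The main obstacle is exactly term (ii), i.e. controlling the recurrent dependence on the differing far-field initial conditions. Bounding $Q^{r^i_\kappa}_\theta$ across the two arguments by mixing alone would only give $\sum_t 2\norm{r^i_\kappa}_\infty m\rho^t$, a constant that does not decay in $g(\kappa,R)$ and would defeat the theorem; the decay is recovered solely by the fact that $r^i_\kappa$ never observes the perturbed far-field coordinates and that those coordinates cannot leak into the neighborhood sub-process under decoupled dynamics, so this independence-of-the-far-field step (and the measurability of the neighborhood actions with respect to unperturbed coordinates that it presupposes) is the delicate point to make fully rigorous. The geometric triangle-inequality bound for $h^\tau_{ji}$ is the other place demanding care, since without it that term stays $O(1)$ rather than $O(1/g^2)$.
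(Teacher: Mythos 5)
Your proof is correct and follows essentially the same route as the paper's: dispose of the cost case via Assumption \ref{assum:pairwise_cost_independence}, bound $\sup_{s,a}|r^i - r^i_\kappa|$ by a constant times $|\mc{N}_\kappa^{-1}(i)|/g^2(\kappa,R)$ using the radar range equations and Assumptions \ref{assum:network_coverage} and \ref{assum:regularity_conditions}, and convert that pointwise reward bound into a bound on the differential value function via the geometric mixing of Assumption \ref{assum:uniform_ergodicity} --- your three-term decomposition with the vanishing middle term is exactly the cancellation the paper performs implicitly when it adds and subtracts $r^i_\kappa$ inside the expectations, and you correctly flag that step (the insensitivity of the neighborhood sub-process to the perturbed far-field coordinates) as the delicate one. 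The only substantive local deviation is your bound on $h^\tau_{ji}$, where the single observation $R_i + R_j \ge d_{ij} \ge g(\kappa,R)$ combined with $R_i, R_j \ge 1$ yields $R_i^2 R_j^2 \ge g^2(\kappa,R)/4$ in one stroke, subsuming (and arguably cleaning up) the paper's four-case analysis.
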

\begin{proof}
    Assumption \ref{assum:pairwise_cost_independence} renders the $f^i = c^i$ case trivial, so we give the proof only for the $f^i = r^i$ case. Fix $\theta \in \Theta, i \in \mc{N},$ and $\kappa \in \mathbb{N}^+$. Let $s^{\mc{N}_{\kappa}(i)} \in \mc{S}^{\mc{N}_{\kappa}(i)}$, $a^{\mc{N}_{\kappa}(i)} \in \mc{A}^{\mc{N}_{\kappa}(i)}$, $s^{\mc{N}^{-1}_{\kappa}(i)}, \bar{s}^{\mc{N}^{-1}_{\kappa}(i)} \in \mc{S}^{\mc{N}^{-1}_{\kappa}(i)}$, and $a^{\mc{N}^{-1}_{\kappa}(i)}, \bar{a}^{\mc{N}^{-1}_{\kappa}(i)} \in \mc{A}^{\mc{N}^{-1}_{\kappa}(i)}$. Define $s = (s^{\mc{N}_{\kappa}(i)}, s^{\mc{N}_{\kappa}^{-1}(i)}), \bar{s} = (s^{\mc{N}_{\kappa}(i)}, \bar{s}^{\mc{N}_{\kappa}^{-1}(i)}), a = (a^{\mc{N}_{\kappa}(i)}, a^{\mc{N}_{\kappa}^{-1}(i)})$, and $\bar{a} = (a^{\mc{N}_{\kappa}(i)}, \bar{a}^{\mc{N}_{\kappa}^{-1}(i)})$.

    By the definition of $Q^{r^i}_{\theta}$, we have
    \begin{align}
        &\abs*{Q^{r^i}_{\theta}(s, a) - Q^{r^i}_{\theta}(\bar{s}, \bar{a}) } \\
        &= \big| \E_{\pi_{\theta}} \big[ \sum_{t=0}^{\infty} r^i(s_t, a_t) - J_{r^i}(\theta) \ | \ s_0 = s, a_0 = a \big] \nonumber \\
        &\qquad - \E_{\pi_{\theta}} \big[ \sum_{t=0}^{\infty} r^i(s_t, a_t) - J_{r^i}(\theta) \ | \ s_0 = \bar{s}, a_0 = \bar{a} \big] \big| \label{eqn:thm1:1} \\
        &= \big| \E_{\pi_{\theta}} \big[ \sum_{t=0}^{\infty} r^i(s_t, a_t) \ | \ s_0 = s, a_0 = a \big] \nonumber \\
        &\qquad - \E_{\pi_{\theta}} \big[ \sum_{t=0}^{\infty} r^i(s_t, a_t) \ | \ s_0 = \bar{s}, a_0 = \bar{a} \big] \big|, \label{eqn:thm1:2}
    \end{align}
    where the last equality holds by the fact that $\E_{\pi_{\theta}} \left[ J_{r^i}(\theta) \ | \ s_0 = s, a_0 = a \right] = \E_{\pi_{\theta}} \left[ J_{r^i}(\theta) \ | \ s_0 = \bar{s}, a_0 = \bar{a} \right]$, since Assumption \ref{assum:uniform_ergodicity} implies ergodicity of the Markov chain over $\mc{S}$ induced by $\pi_{\theta}$.
    Recalling \eqref{eqn:SINR_dynamic_kappa} and continuing from \eqref{eqn:thm1:2}, we have
    \small
    \begin{align}
        &\abs*{Q^{r^i}_{\theta}(s, a) - Q^{r^i}_{\theta}(\bar{s}, \bar{s}) } \\
        &= \Big| \sum_{t=0}^\infty \Big[ \E_{\pi_{\theta}} \big[ r^i(s_t, a_t) - r^i_{\kappa}(s_t, a_t) + r^i_{\kappa}(s_t, a_t) \ | \ s_0 = s, a_0 = a \big] \nonumber \\
        &\qquad - \E_{\pi_{\theta}} \big[ r^i(s_t, a_t) \ | \ s_0 = \bar{s}, a_0 = \bar{a} \big] \Big] \Big| \label{eqn:thm1:3} \\
        &= \Big| \sum_{t=0}^{\infty} \Big[ \E_{\pi_{\theta}} \big[ r^i(s_t, a_t) - r^i_{\kappa}(s_t, a_t) \ | \ s_0 = s, a_0 = a \big] \nonumber \\
        &\qquad - \E_{\pi_{\theta}} \big[ r^i(s_t, a_t) - r^i_{\kappa}(s_t, a_t) \ | \ s_0 = \bar{s}, a_0 = \bar{a} \big] \Big] \Big|, \label{eqn:thm1:4}
    \end{align}
    \normalsize
    where \eqref{eqn:thm1:4} follows by the independence of $a$ of the transition dynamics $p$ defined in Section \ref{subsec:cmamdp} and the fact that $r^i_{\kappa}(s, a)$ is independent of the value of $s^{\mc{N}_{\kappa}^{-1}(i)}, a^{\mc{N}_{\kappa}^{-1}(i)}$ by definition.

    Now notice that
    \small
    \begin{align}
        &\sum_{t=0}^\infty \E_{\pi_{\theta}} \big[ r^i(s_t, a_t) - r^i_{\kappa}(s_t, a_t) \ | \ s_0 = s, a_0 = a \big] \\
        &=\sum_{t=0}^\infty \sum_{s' \in \mc{S}} \sum_{a' \in \mc{A}} G^i_{\kappa}(s', a') \pi_{\theta}(a' | s') d_{\pi_{\theta}}^t(s' | s_0 = s, a_0 = a), \label{eqn:thm1:4:2}
    \end{align}
    \normalsize
    where $d_{\pi_{\theta}}^t$ denotes the $t$-step state distribution of Assumption \ref{assum:uniform_ergodicity} and $G^i_{\kappa}(s', a') = r^i(s', a') - r^i_{\kappa}(s', a')$, and that an analogous expression holds for $s_0 = \bar{s}, a_0 = \bar{a}$. Continuing from \eqref{eqn:thm1:4} and \eqref{eqn:thm1:4:2}, we have
    \small
    \begin{align}
        &\abs*{Q^{r^i}_{\theta}(s, a) - Q^{r^i}_{\theta}(\bar{s}, \bar{s}) } \\
        &= \Big| \sum_{t=0}^\infty \sum_{s' \in \mc{S}} \sum_{a' \in \mc{A}} G_{\kappa}^i(s', a') \pi_{\theta}(a' | s') \Big( d_{\pi_{\theta}}^t (s' | s_0 = s, a_0 = a) \nonumber \\
        &\hspace{4cm} - d_{\pi_{\theta}}^t (s' | s_0 = \bar{s}, a_0 = \bar{a}) \Big) \Big| \\
        &\labelrel{\leq}{(a)} \sum_{t=0}^\infty \sum_{s' \in \mc{S}} \sum_{a' \in \mc{A}} \abs*{ G^i_{\kappa}(s',a') } \cdot \big| d_{\pi_{\theta}}^t(s' | s_0 = s, a_0 = a) \nonumber \\
        &\hspace{4cm} - d_{\pi_{\theta}}^t(s' | s_0 = \bar{s}, a_0 = \bar{a}) \big| \\
        &\labelrel{\leq}{(b)} \sum_{t=0}^\infty \sum_{s' \in \mc{S}} \sum_{a' \in \mc{A}} \abs*{ G^i_{\kappa}(s',a') } \Big[ \abs*{d_{\pi_{\theta}}^t(s' | s_0 = s, a_0 = a) - d_{\pi_{\theta}}(s') } \nonumber \\
        &\hspace{2cm} + \abs*{ d_{\pi_{\theta}}^t(s' | s_0 = \bar{s}, a_0 = \bar{a}) - d_{\pi_{\theta}}(s') } \Big] \\
        &\labelrel{\leq}{(c)} \sup_{s', a'} \abs*{G^i_{\kappa}(s', a')} \sum_{t=0}^\infty \Big[ d_{TV}( d_{\pi_{\theta}}^t( \cdot | s_0 = s, a_0 = a) \ || \ d_{\pi_{\theta}}(\cdot) ) \nonumber \\
        &\hspace{2cm} + d_{TV}( d_{\pi_{\theta}}^t( \cdot | s_0 = \bar{s}, a_0 = \bar{a}) \ || \ d_{\pi_{\theta}}(\cdot) ) \Big] \\
        &\labelrel{\leq}{(d)} \sup_{s', a'} \abs*{ G^i_{\kappa}(s', a') } \sum_{t=0}^\infty 2 m \rho^t = \sup_{s', a'} \abs*{ G^i_{\kappa}(s', a') } \frac{2m}{1 - \rho}, \label{eqn:thm1:5}
    \end{align}
    \normalsize
    where inequality (a) follows by the triangle and Cauchy-Schwarz inequalities and the fact that $0 \leq \pi_{\theta}(a' | s') \leq 1$, inequality (b) follows by adding and subtracting $d_{\theta}(s')$ and applying the triangle inequality, (c) follows by taking suprema and the definition of total variation distance, and (d) follows by Assumption \ref{assum:uniform_ergodicity}.

    All that remains is to bound $\sup_{s', a'} \abs*{ G^i_{\kappa}(s', a') }$. Fix $s \in \mc{S}, a \in \mc{A}$. Recall that
    \small
    \begin{equation}
        r^i(s, a) = \frac{h^{\tau}_{ii}(s) a^i}{\sigma^2(s, a) + \alpha},
        \quad r^i_{\kappa}(s, a) = \frac{h^{\tau}_{ii}(s) a^i}{(\sigma_{\kappa}(s, a))^2 + \beta},
    \end{equation}
    \normalsize
    where
    \begin{align}
        \alpha &= \sum_{j \in \mc{N} \setminus \{ i \}} c_{ji}(s) \left[ h^d_{ji}(s) + h^{\tau}_{ji}(s) \right] a^j, \\
        \beta &= \sum_{j \in \mc{N}_{\kappa}(i) \setminus \{ i \}} c_{ji}(s) \left[ h^d_{ji}(s) + h^{\tau}_{ji}(s) \right] a^j.
    \end{align}
    In light of this, we can write
    \begin{align}
        \abs*{G^i_{\kappa}(s, a)} &= \abs*{ \frac{h^{\tau}_{ii}(s) a^i (\alpha - \beta)}{(\sigma^2(s, a) + \alpha)((\sigma^i_{\kappa}(s, a))^2 + \beta)}} \\
        &\labelrel{=}{(a)} \frac{h^{\tau}_{ii}(s) a^i \abs*{\alpha - \beta}}{\abs*{\sigma^2(s, a) + \alpha} \abs*{(\sigma^i_{\kappa}(s, a))^2 + \beta}} \\
        &\labelrel{\leq}{(b)} \frac{h^{\tau}_{ii}(s) a^i \abs*{\alpha - \beta}}{\underset{\bar{}}{\sigma}^4}, \label{eqn:thm1:6}
    \end{align}
    where (a) follows by nonnegativity of $h^{\tau}_{ii}(s)$ and $a^i$, and (b) follows by nonnegativity of $\alpha$ and $\beta$ and Assumption \ref{assum:regularity_conditions}. By \eqref{eqn:radar_range_1} and Assumption \ref{assum:regularity_conditions},
    \begin{align}
        h^{\tau}_{ii}(s) a^i = \frac{ G_t G_r \sigma^{RCS}_{ii}(s) \lambda^2 a^i }{ (4 \pi)^3 R_i^4(s) } \leq \frac{ G_t G_r \bar{\sigma}^{RCS} \lambda^2 a^{max} }{ (4 \pi)^3 },
    \end{align}
    whence it follows that
    \begin{align}
        \abs*{G^i_{\kappa}(s, a)} \leq \frac{G_t G_r \bar{\sigma}^{RCS} \lambda^2 a^{max} }{(4\pi)^3 \underset{\bar{}}{\sigma}^4 } \abs*{\alpha - \beta}. \label{eqn:thm1:7}
    \end{align}

    It remains to bound $\abs*{\alpha - \beta}$. Notice that
    \begin{align}
        \alpha - \beta &= \sum_{j \in \mc{N}_{\kappa}^{-1}(i)} c_{ji}(s) \left[ h^d_{ji}(s) + h^{\tau}_{ji}(s) \right] a^j \\
        &\leq \bar{c} a^{max} \sum_{j \in \mc{N}_{\kappa}^{-1}(i)} \left[ h^d_{ji}(s) + h^{\tau}_{ji}(s) \right] \label{eqn:thm1:8}
    \end{align}
    so we need only bound $h^d_{ji}(s)$ and $h^{\tau}_{ji}(s)$. For $h^d_{ji}(s)$, notice that, for $j \in \mc{N}_{\kappa}^{-1}(i)$, by \eqref{eqn:radar_range_2} and Assumption \ref{assum:network_coverage} we have
    \begin{align}
        h^d_{ji}(s) = \frac{G_t' G_r' \lambda^2}{(4 \pi)^2 (d_{ij}(s))^2} \leq \frac{G_t' G_r' \lambda^2}{(4 \pi)^2 g^2(\kappa, R)}. \label{eqn:thm1:9}
    \end{align}
    For $h^{\tau}_{ji}(s)$, we consider four possible cases that may arise depending on the physical configuration of the radar network. Fix $j \in \mc{N}_{\kappa}^{-1}(i)$.
    \\ \\
    \textbf{Case 1:} $R_j(s) \leq d_{ij}(s) \leq R_i(s)$. In this case, we know $R_i(s) R_j(s) \geq d_{ij}(s)$, whence
    \begin{align}
        h^{\tau}_{ji}(s) = \frac{G_t G_r \sigma^{RCS}(s) \lambda^2}{(4 \pi)^3 R^2_i(s) R^2_j(s)} \leq \frac{G_t G_r \bar{\sigma}^{RCS} \lambda^2}{(4\pi)^3 g^2(\kappa, R)},
    \end{align}
    where the inequality holds by Assumptions \ref{assum:network_coverage} and \ref{assum:regularity_conditions}.
    \\ \\
    \textbf{Case 2:} $R_i(s) \leq d_{ij}(s) \leq R_j(s)$. This case follows by reasoning identical to Case 1.
    \\ \\
    \textbf{Case 3:} $d_{ij}(s) \leq \min \{ R_i(s), R_j(s) \}$. In this case, $R_i(s) R_j(s) \geq d^2_{ij}(s)$, whence
    \begin{align}
        h^{\tau}_{ij}(s) \leq \frac{G_t G_r \bar{\sigma}^RCS \lambda^2 }{ (4\pi)^3 g^4(\kappa, R)} \labelrel{\leq}{(a)} \frac{G_t G_r \bar{\sigma}^RCS \lambda^2}{(4\pi)^3 g^2(\kappa, R)},
    \end{align}
    where (a) holds since $g(\kappa, R) \geq 1$ by Assumption \ref{assum:network_coverage}.
    \\ \\
    \textbf{Case 4:} $d_{ij}(s) \geq \max \{ R_i(s), R_j(s) \}$. The triangle inequality gives $R_i(s) + R_j(s) \geq d_{ij}(s)$, whence $R_i(s) \geq g(\kappa, R) - R_j(s)$. If $R_j \geq \tfrac{1}{2} g(\kappa, R)$, then
    \begin{align}
        h^{\tau}_{ij}(s) &= \frac{G_t G_r \sigma^{RCS}(s) \lambda^2}{(4\pi)^3 R^2_i(s) R^2_j(s)} \leq \frac{G_t G_r \sigma^{RCS}(s) \lambda^2}{(4\pi)^3 R^2_j(s)} \\
        &\leq \frac{G_t G_r \bar{\sigma}^{RCS} \lambda^2}{(4\pi)^3 \tfrac{1}{4} g^2(\kappa, R)}.
    \end{align}
    On the other hand, if $R_j(s) < \tfrac{1}{2} g(\kappa, R)$, then $R_i(s) \geq g(\kappa, R) - R_j(s) > \tfrac{1}{2} g(\kappa, R)$. Thus, by the same argument as above,
    \begin{align}
        h^{\tau}_{ji}(s) \leq \frac{G_t G_r \bar{\sigma}^{RCS} \lambda^2}{(4\pi)^3 \tfrac{1}{4} g^2(\kappa, R)}.
    \end{align}
    Taking the maximum over all four cases, we obtain
    \begin{align}
        h^{\tau}_{ji}(s) \leq \frac{G_t G_r \bar{\sigma}^{RCS} \lambda^2}{4^2 \pi^3 g^2(\kappa, R)}. \label{eqn:thm1:10}
    \end{align}
    
    Combining \eqref{eqn:thm1:8}, \eqref{eqn:thm1:9}, and \eqref{eqn:thm1:10}, we have
    \small
    \begin{align}
        \abs*{\alpha - \beta} \leq \bar{c} a^{max} \abs*{ \mc{N}_{\kappa}^{-1}(i) } \left[ \frac{G_t' G_r' \lambda^2}{(4 \pi)^2 g^2(\kappa, R)} + \frac{G_t G_r \bar{\sigma}^{RCS} \lambda^2}{4^2 \pi^3 g^2(\kappa, R)} \right], \label{eqn:thm1:11}
    \end{align}
    \normalsize
    which, combined with \eqref{eqn:thm1:7}, gives us
    \begin{align}
        \abs*{G^i_{\kappa}(s, a)} &\leq \frac{G_t G_r \bar{\sigma}^{RCS} \lambda^2 (a^{max})^2 \bar{c}}{(4\pi)^3} \cdot \nonumber \\
        &\hspace{5mm} \left[ \frac{G_t' G_r' \lambda^2}{(4 \pi)^2 } + \frac{G_t G_r \bar{\sigma}^{RCS} \lambda^2}{4^2 \pi^3 } \right] \frac{ \abs*{ \mc{N}_{\kappa}^{-1}(i) } }{ g^2(\kappa, R) }. \label{eqn:thm1:12}
    \end{align}
    Finally, combining \eqref{eqn:thm1:5} with \eqref{eqn:thm1:12}, we have
    \begin{align}
        \abs*{Q^{r^i}_{\theta}(s, a) - Q^{r^i}_{\theta}(\bar{s}, \bar{s}) } \leq \frac{ M \abs*{ \mc{N}_{\kappa}^{-1}(i) } }{ g^2(\kappa, R) },
    \end{align}
    where
    \small
    \begin{align}
        M = \frac{2 m G_t G_r \bar{\sigma}^{RCS} \lambda^4 (a^{max})^2 \bar{c}}{(1 - \rho)(4\pi)^3} \left[ \frac{G_t' G_r'}{(4 \pi)^2 } + \frac{G_t G_r \bar{\sigma}^{RCS}}{4^2 \pi^3 } \right]. \label{eqn:thm1:13}
    \end{align}
    \normalsize
\end{proof}

% \begin{proofsketch}
%     The main effort in the proof revolves around the case where $f^i = r^i$, since Assumption \ref{assum:pairwise_cost_independence} renders the $f^i = c^i$ case trivial. Let $s = (s^{\mc{N}_{\kappa}(i)}, s^{\mc{N}_{\kappa}^{-1}(i)}), s' = (s^{\mc{N}_{\kappa}(i)}, \bar{s}^{\mc{N}_{\kappa}^{-1}(i)})$, $a = (a^{\mc{N}_{\kappa}(i)}, a^{\mc{N}_{\kappa}^{-1}(i)}), a' = (a^{\mc{N}_{\kappa}(i)}, \bar{a}^{\mc{N}_{\kappa}^{-1}(i)})$. In the $f^i = r^i$ case, we begin by bounding the absolute value of the worst-case difference, $\sup | r^i(s, a) - r^i_{\kappa}(s', a') |$, between $r^i, r^i_{\kappa}$ defined in \eqref{eqn:SINR_dynamic}, \eqref{eqn:SINR_dynamic_kappa}.
%     %
%     This bound is obtained by leveraging Assumption \ref{assum:network_coverage}, Assumption \ref{assum:regularity_conditions}, and path loss properties arising from the radar range equations \eqref{eqn:radar_range_1}-\eqref{eqn:radar_range_2}. With this bound in hand, we next bound $|Q^{r^i}(s, a) - Q^{r^i}(s', a')|$ by a product of the bound on $\sup | r^i(s, a) - r^i_{\kappa}(s', a') |$ and the total variation distance between the state occupancy measures induced by $\pi_{\theta}$ from the different initializations $s$ and $s'$. To finish, we bound this total variation distance using Assumptions \ref{assum:network_coverage} and \ref{assum:uniform_ergodicity}.
% \end{proofsketch}

% \begin{remark}
    In Theorem \ref{thm:radar_exp_decay}, the scalar $M$ intuitively corresponds to the maximum possible expected contribution of radar $j$ to $SINR_i$ before accounting for signal attenuation due to distance. When $\kappa = 1$ and $g(\kappa, R) = R$, for example, $M | \mc{N}_{1}^{-1}(i) | / R^2$ bounds the maximum possible contribution to $SINR_i$ originating outside radar $i$'s immediate communication neighborhood $\mc{N}_{1}(i)$, decayed by the square of the communication radius $R$. Furthermore, for general $g$, as $\kappa > 1$ increases $\mc{N}_{\kappa}^{-1}(i)$ will decrease and $g(\kappa, R)$ will increase, resulting in a tighter overall bound $M | \mc{N}_{\kappa}^{-1}(i) | / g^2(\kappa, R)$. This bound can be further tightened by increasing the communication radius $R$.
% \end{remark}

We now proceed to the main result of this section, which establishes bounds on the accuracy of gradient estimators constructed using only local neighborhood information. We first define local $Q$ function approximations. Fix $i \in \mc{N}$, and let $w^i : \mc{S}^{\mc{N}^{-1}_{\kappa}(i)} \times \mc{A}^{\mc{N}^{-1}_{\kappa}(i)} \rightarrow [0, 1]$ be an arbitrary weighting function satisfying
$\sum_{ \bar{s} \in \mc{S}^{\mc{N}^{-1}_{\kappa}(i)}, \bar{a} \in \mc{A}^{\mc{N}^{-1}_{\kappa}(i)} } w^i(\bar{s}, \bar{a}) = 1$.
Let $\widetilde{Q}_{\theta}^{f^i}( s^{\mc{N}_{\kappa}(i)}, a^{\mc{N}_{\kappa}(i)} )$ denote agent $i$'s local approximation of $Q^{f^i}(s, a)$, for $f^i \in \{ r^i, c^i \}$, defined by
\small
\begin{align}
    \widetilde{Q}_{\theta}^{f^i}&( s^{\mc{N}_{\kappa}(i)}, a^{\mc{N}_{\kappa}(i)} ) = \nonumber \\
    & \hspace{-8mm} \sum_{ \bar{s} \in \mc{S}^{\mc{N}^{-1}_{\kappa}(i)}, \bar{a} \in \mc{A}^{\mc{N}^{-1}_{\kappa}(i)} } \hspace{-8mm} Q_{\theta}^{f^i} \left( (s^{\mc{N}_{\kappa}(i)}, \bar{s}), (a^{\mc{N}_{\kappa}(i)}, \bar{a}) \right) w^i(\bar{s}, \bar{a}). \label{eqn:local_Q}
\end{align}
\normalsize
For any such weighting function $w^i$, the following approximation result is implied by Theorem \ref{thm:radar_exp_decay}.
\begin{theorem} \label{thm:radar_local_grads}
Let the conditions of Theorem \ref{thm:radar_exp_decay} and Assumption \ref{assum:lipschitz_score_functions} hold. Fix $i, j \in \mc{N}, f^i \in \{ r^i, c^i \}$, define $f(s, a) = \sum_{i \in \mc{N}} f^i(s^i, a^i)$, and let $M$ be as in Theorem \ref{thm:radar_exp_decay}. We have:
\begin{enumerate}[(i)]
    \small
    \item $| \widetilde{Q}_{\theta}^{f^i}( s^{\mc{N}_{\kappa}(i)}, a^{\mc{N}_{\kappa}(i)} ) - Q_{\theta}^{f^i}(s, a) | \leq \frac{M | \mc{N}_{\kappa}^{-1}(i) |}{g^2(\kappa, R)}$, \\ for all $s \in \mc{S}, a \in \mc{A}$; \label{thm:2:i}
    \item $\norm{ \widehat{h_{f^j}^i}(\theta) - \nabla_{\theta^i} J_{f^j}(\theta) } \leq \frac{M L^i | \mc{N}_{\kappa}^{-1}(j) |}{g^2(\kappa, R)}$,
    where $\widehat{h^i_{f^j}}(\theta) = \mb{E}_{\pi_{\theta}} \left[ \widetilde{Q}_{\theta}^{f^j}( s^{\mc{N}_{\kappa}(j)}, a^{\mc{N}_{\kappa}(j)} ) \nabla_{\theta^i} \log \pi^i_{\theta^i}(a^i | s^i) \right] $; \label{thm:2:ii}
    \item If Assumption \ref{assum:bdd_interagent_gradients} also holds, then $\norm{ \widehat{h^i_f}(\theta) - \nabla_{\theta^i} J_f(\theta) } \leq \frac{M \bar{n} L^i | \mc{N}_{\kappa}^{-1}(i) |}{g^2(\kappa, R)} + \abs*{ \mc{N}_{\kappa}^{-1}(i) } \varepsilon_{\kappa},$
    where $\widehat{h^i_f}(\theta) = \mb{E}_{\pi_{\theta}} \left[ \sum_{j \in \mc{N}_{\kappa}(i)} \widetilde{Q}_{\theta}^{f^j}( s^{\mc{N}_{\kappa}(j)}, a^{\mc{N}_{\kappa}(j)} ) \nabla_{\theta^i} \log \pi^i_{\theta^i}(a^i | s^i) \right]$ and $\bar{n} = \max_{j \in \mc{N}} \abs*{ \mc{N}_{\kappa}^{-1}(j)}$. \label{thm:2:iii}
    \item If Assumption \ref{assum:bdd_interagent_gradients} also holds, then $\norm{ \sum_{j \in \mc{N}_{\kappa}(i)} \eta^j \widehat{ h^i_{f^j} }(\theta) - \sum_{l \in \mc{N}} \eta^l \nabla_{\theta^i} J_{f^l}(\theta) }$ \\ \qquad $\leq \sum_{j \in \mc{N}_{\kappa}(i)} \abs*{ \eta^j } \frac{ M L^i \abs*{ \mc{N}_{\kappa}^{-1}(j) }}{ g^2(\kappa, R) } + \sum_{j \in \mc{N}_{\kappa}^{-1}(i)} \abs*{ \eta^j } \varepsilon_{\kappa}$, for all $\eta \in \mb{R}^n$. \label{thm:2:iv}
\end{enumerate}
\end{theorem}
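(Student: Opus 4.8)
The plan is to reduce part (iv) to the already-established part (ii) together with the gradient bounds supplied by Assumptions \ref{assum:pairwise_cost_independence} and \ref{assum:bdd_interagent_gradients}, via a single decomposition and triangle inequality. The key observation is that the summation index on the two terms differs: the estimator sum runs over the local neighborhood $\mc{N}_{\kappa}(i)$, while the target sum runs over all of $\mc{N}$. So first I would split the target according to this partition, writing $\sum_{l \in \mc{N}} \eta^l \nabla_{\theta^i} J_{f^l}(\theta) = \sum_{j \in \mc{N}_{\kappa}(i)} \eta^j \nabla_{\theta^i} J_{f^j}(\theta) + \sum_{j \in \mc{N}_{\kappa}^{-1}(i)} \eta^j \nabla_{\theta^i} J_{f^j}(\theta)$, and subtract this from the estimator.

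Grouping terms by their neighborhood membership, the difference becomes $\sum_{j \in \mc{N}_{\kappa}(i)} \eta^j \big( \widehat{h^i_{f^j}}(\theta) - \nabla_{\theta^i} J_{f^j}(\theta) \big) - \sum_{j \in \mc{N}_{\kappa}^{-1}(i)} \eta^j \nabla_{\theta^i} J_{f^j}(\theta)$. Applying the triangle inequality and pulling out the scalars $\abs{\eta^j}$ separates the bound into a local approximation-error piece and a far-field piece. For the local piece, I would invoke part (ii) of this theorem directly: each summand is bounded by $\abs{\eta^j} M L^i \abs{\mc{N}_{\kappa}^{-1}(j)} / g^2(\kappa, R)$, which already reproduces the first term in the claimed bound.

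For the far-field piece, I would bound $\norm{\nabla_{\theta^i} J_{f^j}(\theta)}$ uniformly by $\varepsilon_{\kappa}$ for every $j \in \mc{N}_{\kappa}^{-1}(i)$. Here the two reward/cost cases must be handled separately but both collapse to the same bound: when $f^j = r^j$, this is precisely Assumption \ref{assum:bdd_interagent_gradients}; when $f^j = c^j$, since $j \neq i$ (as $i \in \mc{N}_{\kappa}(i)$ is excluded from $\mc{N}_{\kappa}^{-1}(i)$), part (i) of Assumption \ref{assum:pairwise_cost_independence} gives $\nabla_{\theta^i} J_{c^j}(\theta) = 0$, so the norm is zero and trivially below $\varepsilon_{\kappa}$. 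Summing over $j \in \mc{N}_{\kappa}^{-1}(i)$ yields the second term $\sum_{j \in \mc{N}_{\kappa}^{-1}(i)} \abs{\eta^j} \varepsilon_{\kappa}$, completing the bound.

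The argument is essentially bookkeeping once the partition is chosen correctly, so there is no genuinely hard step; the only subtlety I would flag is ensuring the index sets line up—in particular confirming that $i$ itself lies in $\mc{N}_{\kappa}(i)$ so that the self-gradient $\nabla_{\theta^i} J_{f^i}(\theta)$ is paired with $\widehat{h^i_{f^i}}(\theta)$ in the local sum rather than left in the far-field sum where Assumption \ref{assum:pairwise_cost_independence}(i) would not apply. Verifying this partition correctness is where I would concentrate my attention, as the entire bound hinges on the far-field sum containing only genuinely distant agents.
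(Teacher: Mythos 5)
Your argument for part (iv) is correct and follows essentially the same route as the paper: the identical split of the target sum over $\mc{N}_{\kappa}(i)$ versus $\mc{N}_{\kappa}^{-1}(i)$, the triangle inequality with $\abs{\eta^j}$ pulled out, part (ii) on the local terms, and Assumption \ref{assum:bdd_interagent_gradients} (with the cost case vanishing via Assumption \ref{assum:pairwise_cost_independence}) on the far-field terms. The only difference is that the paper dispatches the $f^i = c^i$ case wholesale at the outset rather than inside the far-field bound, and note that your write-up addresses only part (iv), taking parts (i)--(iii) as given.
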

\begin{proof}
    As in Theorem \ref{thm:radar_exp_decay}, the $f^i = c^i$ case is trivial by Assumption \ref{assum:pairwise_cost_independence}, so consider only the $f^i = r^i$ case. Fix $w^i, s \in \mc{S}$, and $a \in \mc{A}$.
    \\ \\
    \textbf{Part (i).} By the definition of $\widetilde{Q}^{r^i}_{\theta}$, we have
    \footnotesize
    \begin{align}
        &\abs*{ \widetilde{Q}^{r^i}_{\theta}( s^{\mc{N}_{\kappa}(i)}, a^{\mc{N}_{\kappa}(i)} ) - Q^{r^i}_{\theta}(s, a) } \\
        %
        % &= \Big| \hspace{-5mm} \sum_{\bar{s} \in \mc{S}^{\mc{N}_{\kappa}^{-1}(i)}, \bar{a} \in \mc{A}^{\mc{N}_{\kappa}^{-1}(i)}} \hspace{-10mm} w^i(\bar{s}, \bar{a}) \left[ Q^{r^i}_{\theta}\left( ( s^{\mc{N}_{\kappa}(i)}, \bar{s} ), ( a^{\mc{N}_{\kappa}(i)}, \bar{a} ) \right) - Q^{r^i}_{\theta}(s, a) \right] \Big| \\
        %
        &\leq \hspace{-5mm} \sum_{\bar{s} \in \mc{S}^{\mc{N}_{\kappa}^{-1}(i)}, \bar{a} \in \mc{A}^{\mc{N}_{\kappa}^{-1}(i)}} \hspace{-5mm} w^i(\bar{s}, \bar{a}) \abs*{ Q^{r^i}_{\theta}\left( ( s^{\mc{N}_{\kappa}(i)}, \bar{s} ), ( a^{\mc{N}_{\kappa}(i)}, \bar{a} ) \right) - Q^{r^i}_{\theta}(s, a) } \\
        &\leq \hspace{-5mm} \sum_{\bar{s} \in \mc{S}^{\mc{N}_{\kappa}^{-1}(i)}, \bar{a} \in \mc{A}^{\mc{N}_{\kappa}^{-1}(i)}} \hspace{-10mm} w^i(\bar{s}, \bar{a}) \frac{ M \abs*{ \mc{N}_{\kappa}^{-1}(i) } }{ g^2(\kappa, R) } = \frac{ M \abs*{ \mc{N}_{\kappa}^{-1}(i) } }{ g^2(\kappa, R) }.
    \end{align}
    \normalsize
    \\ \\
    \textbf{Part (ii).} By the policy gradient theorem \cite{sutton1999policy},
    \begin{align}
        \nabla_{\theta^i} J_{r^j}(\theta) = \E_{\pi_{\theta}} \left[ Q^{r^j}_{\theta}(s, a) \nabla_{\theta^i} \log \pi^i_{\theta^i}(a^i | s^i) \right].
    \end{align}
    This means we can write
    \small
    \begin{align}
        &\norm{ \widehat{h^i_{r^j}}(\theta) - \nabla_{\theta^i} J_{r^j}(\theta) } \\
        &= \E_{\pi_{\theta}} \big[ \big( \widetilde{Q}^{r^j}_{\theta}( s^{\mc{N}_{\kappa}(j)}, a^{\mc{N}_{\kappa}(j)} ) - Q^{r^j}_{\theta}(s, a) \big) \nabla_{\theta^i} \log \pi^i_{\theta^i}(a^i | s^i) \big] \\
        &\labelrel{\leq}{(a)} \E_{\pi_{\theta}} \left[ \abs*{ \widetilde{Q}^{r^j}_{\theta}( s^{\mc{N}_{\kappa}(j)}, a^{\mc{N}_{\kappa}(j)} ) - Q^{r^j}_{\theta}(s, a) } \cdot \norm{ \nabla_{\theta^i} \log \pi^i_{\theta^i}(a^i | s^i) } \right] \\
        &\labelrel{\leq}{(b)} \E_{\pi_{\theta}} \left[ \frac{ M \abs*{ \mc {N}_{\kappa}^{-1}(j) } }{ g^2(\kappa, R) } \cdot L^i \right] = \frac{ M L^i \abs*{ \mc {N}_{\kappa}^{-1}(j) } }{ g^2(\kappa, R) },
    \end{align}
    \normalsize
    where (a) follows by Jensen's inequality and the Cauchy-Schwarz inequality, and (b) follows by Theorem \ref{thm:radar_exp_decay} and Assumption \ref{assum:lipschitz_score_functions}.
    \\ \\
    \textbf{Part (iii).} By \eqref{eqn:global_obs} and the definition of $\widehat{h^i_r}(\theta)$, we have
    \begin{align}
        \nabla_{\theta^i} J_r(\theta) &= \sum_{j \in \mc{N}} \nabla_{\theta^i} J_{r^j}(\theta),
        \quad
        \widehat{h^i_r}(\theta) = \sum_{j \in \mc{N}_{\kappa}(i)} \widehat{h^i_{r^j}}(\theta).
    \end{align}
    We thus have
    \small
    \begin{align}
        &\norm{ \widehat{h^i_r}(\theta) - \nabla_{\theta^i} J_r(\theta) } \\
        &= \norm{ \sum_{j \in \mc{N}_{\kappa}(i)} \left( \widehat{h^i_{r^j}}(\theta) - \nabla_{\theta^i} J_{r^j}(\theta) \right) - \sum_{j \in \mc{N}_{\kappa}^{-1}(i)} \nabla_{\theta^i} J_{r^j}(\theta) } \\
        &\leq \sum_{j \in \mc{N}_{\kappa}(i)} \norm{ \widehat{h^i_{r^j}}(\theta) - \nabla_{\theta^i} J_{r^j}(\theta) } + \sum_{j \in \mc{N}_{\kappa}^{-1}(i)} \norm{ \nabla_{\theta^i} J_{r^j}(\theta) } \\
        &\labelrel{\leq}{(a)} \sum_{j \in \mc{N}_{\kappa}(i)} \frac{ M L^i \abs*{ \mc{N}_{\kappa}^{-1}(j) } }{ g^2(\kappa, R) } + \sum_{j \in \mc{N}_{\kappa}^{-1}} \varepsilon_{\kappa} \\
        &\labelrel{\leq}{(b)} \frac{ M L^i \bar{n} \abs*{ \mc{N}_{\kappa}^{-1}(j) } }{ g^2(\kappa, R) } + \abs*{ \mc{N}_{\kappa}^{-1}(i) } \varepsilon_{\kappa},
    \end{align}
    \normalsize
    where (a) follows by Part (ii) and Assumption \ref{assum:bdd_interagent_gradients}, and (b) follows by the definition of $\bar{n}$ in the statement of the theorem.
    \\ \\
    \textbf{Part (iv).} Fix $\eta \in \mathbb{R}^n$. We know that
    \small
    \begin{align}
        &\norm{ \sum_{j \in \mc{N}_{\kappa}(i)} \eta^j \widehat{h^i_{r^j}}(\theta) - \sum_{j \in \mc{N}} \eta^j \nabla_{\theta^i} J_{r^j}(\theta) } \\
        &= \norm{ \sum_{j \in \mc{N}_{\kappa}(i)} \eta^j \left[ \widehat{h^i_{r^j}}(\theta) - \nabla_{\theta^i} J_{r^j}(\theta) \right] - \sum_{j \in \mc{N}_{\kappa}^{-1}(i)} \eta^j \nabla_{\theta^i} J_{r^j}(\theta) } \\
        &\labelrel{\leq}{(a)} \sum_{j \in \mc{N}_{\kappa}(i)} \abs*{\eta^j} \norm{ \widehat{h^i_{r^j}}(\theta) - \nabla_{\theta^i} J_{r^j}(\theta) } + \sum_{j \in \mc{N}_{\kappa}^{-1}(i)} \abs*{\eta^j} \norm{ \nabla_{\theta^i} J_{r^j}(\theta) } \\
        &\labelrel{\leq}{(b)} \sum_{j \in \mc{N}_{\kappa}(i)} \abs*{\eta^j} \frac{ M L^i \abs*{ \mc{N}_{\kappa}^{-1}(j) }}{g^2(\kappa, R)} + \sum_{j \in \mc{N}_{\kappa}^{-1}(i)} \abs*{\eta^j} \varepsilon_{\kappa},
    \end{align}
    \normalsize
    where (a) follows by the triangle and Cauchy-Schwarz inequalities and (b) follows by Part (ii) and Assumption \ref{assum:bdd_interagent_gradients}.
\end{proof}

% \begin{proofsketch}
%     As in Theorem \ref{thm:radar_exp_decay}, the case where $f^i = c^i$ is trivial due to Assumption \ref{assum:pairwise_cost_independence}, so we consider the case where $f^i = r^i$. Part \eqref{thm:2:i} holds by Theorem \ref{thm:radar_exp_decay} and an easy bounding argument leveraging the fact that the weighting function $w^i$ in \eqref{eqn:local_Q} is a convex combination. Part \eqref{thm:2:ii} follows from Theorem \ref{thm:radar_exp_decay} by an application of the Cauchy-Schwarz inequality and Assumption \ref{assum:lipschitz_score_functions}. Part \eqref{thm:2:iii} follows by considering \eqref{thm:2:ii} over all $\mc{N}$, and \eqref{thm:2:iv} follows from \eqref{thm:2:iii} combined with Assumption \ref{assum:bdd_interagent_gradients} and an application of Cauchy-Schwarz.
% \end{proofsketch}

Theorem \ref{thm:radar_local_grads} provides approximate policy gradient expressions that can be computed using only local neighborhood information and establishes corresponding error bounds depending on the system model, communication radius, choice of $\kappa$, policy design, and placement of radars within the network. The bounds arise due to signal decay inherent in the radar range equations \eqref{eqn:radar_range_1}-\eqref{eqn:radar_range_2}, manifested in the reward \eqref{eqn:SINR_dynamic}. These bounds provide a guide to the selection of $\kappa$, design of $g$ from Assumption \ref{assum:network_coverage}, radar placement based on the effective communication range $R$, properties of the system model captured by $M$, properties of the parametric policy class through the Lipschitz constants $L_i$, and the number of radars $n$. Importantly, the result demonstrates that, for appropriate choice of design parameters given the underlying system, the expressions presented in Theorem \ref{thm:radar_local_grads} provide good approximations of the desired policy gradients while using only local neighborhood information.
\noindent Armed with these policy gradient expressions, we next turn to development of decentralized MARL algorithms leveraging them to solve the problems proposed in Section \ref{subsec:cmamdp}.
\section{Algorithms} \label{sec:algorithms}

In this section, we derive decentralized, policy gradient-based MARL methods for solving problems \eqref{opt:Q_kappa} and \eqref{opt:R_kappa} proposed in Section \ref{subsec:cmamdp}. In each case, we solve the problem using a decentralized, stochastic policy gradient descent-ascent procedure, or decentralized saddle point policy gradient (D-SP-PG), on the Lagrangian relaxation corresponding to the original problem. Performing gradient ascent-descent on the Lagrangian relaxation is a standard solution technique for constrained RL problems \cite{paternain2019constrained}, while our extension to the MARL setting is inspired by decentralized approaches to optimization over networks \cite{koppel2015saddle, koppel2017proximity}. For each of problems \eqref{opt:Q_kappa} and \eqref{opt:R_kappa}, we first formulate the Lagrangian of the corresponding problem, then provide gradient expressions and local approximations for each agent, and finally state the corresponding decentralized algorithm.

\begin{algorithm}[t]
	\caption{SINR Maximization with Cost Constraints}
	\label{alg:sinr_max}
	\begin{algorithmic}[1]
		\STATE \textbf{Input:} stepsizes $\alpha_t, \beta_t, \zeta_t$
		\STATE \textbf{Initialize:} initialize $s_0, \theta_0, \nu_0$, set $t = \widehat{\mu^{c^i}_0} = \widehat{\mu^{r^i}_0} = 0$, and set all entries of $\widetilde{Q}^{r^i}, \widetilde{Q}^{c^i}$ to 0, for all $i \in \mc{N}$
		%\REPEAT
		%$\STATE Initialize $noChange = true$.
		\FOR{agent $i \in \mc{N}$}
            \STATE share $s^i_t$ with $\mc{N}_{\kappa}(i)$, receive $s^{\mc{N}_{\kappa}(i)}_t$ from $\mc{N}_{\kappa}(i)$
            \STATE take action $a^i_t \sim \pi^i_{\theta^i_t}(\cdot | s^{\mc{N}_{\kappa}(i)}_t)$
            \STATE observe $r^i_t = r^i(s_t, a_t)$, $c^i_t = c^i(s_t, a_t)$
            %
            % \STATE $\text{\texttt{UPDATE\_AVE}}(\widehat{ \mu^{c^i}_{t-1} }, c^i_t)$
            % %
            % \STATE $\text{\texttt{UPDATE\_AVE}}(\widehat{ \mu^{r^i}_{t-1} }, r^i_t)$
            %
            \STATE $\widehat{ \mu^{c^i}_t } = (1 - \zeta_t) \widehat{ \mu^{c^i}_{t-1} } + \zeta_t c^i_t$
            \STATE $\widehat{ \mu^{r^i}_t } = (1 - \zeta_t) \widehat{ \mu^{r^i}_{t-1} } + \zeta_t r^i_t$
            \STATE share $a^i_t$ with $\mc{N}_{\kappa}(i)$, receive $a^{\mc{N}_{\kappa}(i)}_t$ from $\mc{N}_{\kappa}(i)$
            \STATE $\widetilde{Q}^{c^i}_t = \text{\texttt{UPDATE\_Q}}( \widetilde{Q}^{c^i}_{t-1}, c^i_t, \widehat{ \mu^{c^i}_t }, s^{\mc{N}_{\kappa}(i)}_t, a_t^{\mc{N}_{\kappa}(i)}, \zeta_t)$
            \STATE $\widetilde{Q}^{r^i}_t = \text{\texttt{UPDATE\_Q}}( \widetilde{Q}^{r^i}_{t-1}, r^i_t, \widehat{ \mu^{r^i}_t }, s^{\mc{N}_{\kappa}(i)}_t, a_t^{\mc{N}_{\kappa}(i)}, \zeta_t)$
            \STATE share $\widetilde{Q}^{r^i}_t (s_t^{\mc{N}_{\kappa}(i)}, a_t^{\mc{N}_{\kappa}(i)}), \widehat{\mu^{c^i}_t}, \nu^i_t$ with $\mc{N}_{\kappa}(i)$, receive $\widetilde{Q}^{r^j}_t (s_t^{\mc{N}_{\kappa}(j)}, a_t^{\mc{N}_{\kappa}(j)}), \widehat{\mu^{c^j}_t}, \nu^j_t$ from $\mc{N}_{\kappa}(i)$
            \STATE form estimates:
                \small
                \begin{align*}
                    \widehat{h^i_{r, t}} &= \sum_{j \in \mc{N}_{\kappa}(i) }\widetilde{Q}^{r^j}_t (s_t^{\mc{N}_{\kappa}(j)}, a_t^{\mc{N}_{\kappa}(j)}) \nabla_{\theta^i} \log \pi^i_{\theta^i_t}(a^i_t | s_t^{\mc{N}_{\kappa}(i)}) \\
                    \widehat{h^i_{c^i, t}} &= \widetilde{Q}^{c^i}_t (s_t^{\mc{N}_{\kappa}(i)}, a_t^{\mc{N}_{\kappa}(i)})\nabla_{\theta^i} \log \pi^i_{\theta^i}(a^i_t | s_t^{\mc{N}_{\kappa}(i)})
                \end{align*}
                \normalsize
            \STATE update:
                \begin{align*}
                    \theta^i_{t+1} &= \theta^i_t + \alpha_t \big( \widehat{h^i_{r, t}} - \sum_{j \in \mc{N}_{\kappa}(i)} \nu^j \widehat{h^i_{c^i, t}} \big) \\
                    \nu^i_{t+1} &= \nu^i_t - \beta_t \big( u^i - \sum_{j \in \mc{N}_{\kappa}(i)} \widehat{\mu^{c^j}_t} \big)
                \end{align*}
            \STATE $t \gets t+1$
		\ENDFOR
	\end{algorithmic}
\end{algorithm}
\begin{algorithm}[t]
	\caption{Cost Minimization with SINR Constraints}
	\label{alg:cost_min}
	\begin{algorithmic}[1]
		\STATE \textbf{Input:} stepsizes $\alpha_t, \beta_t, \zeta_t$
		\STATE \textbf{Initialize:} initialize $s_0, \theta_0, \eta_0, \nu_0$, set $t = \widehat{\mu^{c^i}_0} = \widehat{\mu^{r^i}_0} = 0$, and set all entries of $\widetilde{Q}^{r^i}, \widetilde{Q}^{c^i}$ to 0, for all $i \in \mc{N}$
		\FOR{agent $i \in \mc{N}$}
            %
            % \STATE take action $a^i_t \sim \pi^i_{\theta^i_t}(\cdot | s^i_t)$
            % %
            % \STATE observe SINR $r^i_t = r^i(s_t, a_t)$, power $c^i_t = a^i_t$, state $s_t$
            % %
            % \STATE get $a^{\mc{N}_{\kappa}(i)}_t$ from neighborhood $\mc{N}_{\kappa}(i)$
            % \STATE $\text{\texttt{UPDATE\_AVE}}(\widehat{ \mu^{c^i}_{t-1} }, c^i_t)$
            % %
            % \STATE $\text{\texttt{UPDATE\_AVE}}(\widehat{ \mu^{r^i}_{t-1} }, r^i_t)$
            %
            %
            \STATE share $s^i_t$ with $\mc{N}_{\kappa}(i)$, receive $s^{\mc{N}_{\kappa}(i)}_t$ from $\mc{N}_{\kappa}(i)$
            \STATE take action $a^i_t \sim \pi^i_{\theta^i_t}(\cdot | s^{\mc{N}_{\kappa}(i)}_t)$
            \STATE observe $r^i_t = r^i(s_t, a_t)$, $c^i_t = c^i(s_t, a_t)$
            \STATE $\widehat{ \mu^{c^i}_t } = (1 - \zeta_t) \widehat{ \mu^{c^i}_{t-1} } + \zeta_t c^i_t$
            \STATE $\widehat{ \mu^{r^i}_t } = (1 - \zeta_t) \widehat{ \mu^{r^i}_{t-1} } + \zeta_t r^i_t$
            \STATE share $a^i_t$ with $\mc{N}_{\kappa}(i)$, receive $a^{\mc{N}_{\kappa}(i)}_t$ from $\mc{N}_{\kappa}(i)$
            %
            % \STATE $\text{\texttt{UPDATE\_Q}}( \widetilde{Q}^{c^i}_{t-1}, c^i_t, \widehat{ \mu^{c^i}_t }, (s_{t-1}, a_{t-1}^{\mc{N}_{\kappa}(i)}), (s_t, a_t^{\mc{N}_{\kappa}(i)}))$
            % %
            % \STATE $\text{\texttt{UPDATE\_Q}}( \widetilde{Q}^{r^i}_{t-1}, r^i_t, \widehat{ \mu^{r^i}_t }, (s_{t-1}, a_{t-1}^{\mc{N}_{\kappa}(i)}), (s_t, a_t^{\mc{N}_{\kappa}(i)}))$
            %
            % \STATE get $\widetilde{Q}^{r^j}_t (s_t, a_t^{\mc{N}_{\kappa}(j)}), \widehat{\mu^{c^j}_t}, \nu^j_t, \lambda^j_t$ from neighbors $\mc{N}_{\kappa}(i)$
            %
            \STATE $\widetilde{Q}^{c^i}_t = \text{\texttt{UPDATE\_Q}}( \widetilde{Q}^{c^i}_{t-1}, c^i_t, \widehat{ \mu^{c^i}_t }, s^{\mc{N}_{\kappa}(i)}_t, a_t^{\mc{N}_{\kappa}(i)}, \zeta_t)$
            \STATE $\widetilde{Q}^{r^i}_t = \text{\texttt{UPDATE\_Q}}( \widetilde{Q}^{r^i}_{t-1}, r^i_t, \widehat{ \mu^{r^i}_t }, s^{\mc{N}_{\kappa}(i)}_t, a_t^{\mc{N}_{\kappa}(i)}, \zeta_t)$
            \STATE share $\widetilde{Q}^{r^i}_t (s_t^{\mc{N}_{\kappa}(i)}, a_t^{\mc{N}_{\kappa}(i)}), \widehat{\mu^{c^i}_t}, \nu^i_t$ with $\mc{N}_{\kappa}(i)$, receive $\widetilde{Q}^{r^j}_t (s_t^{\mc{N}_{\kappa}(j)}, a_t^{\mc{N}_{\kappa}(j)}), \widehat{\mu^{c^j}_t}, \nu^j_t$ from $\mc{N}_{\kappa}(i)$
            \STATE form estimates:
                \small
                \begin{align*}
                    \widehat{h^i_{c^i, t}} &= \widetilde{Q}^{c^i}_t (s_t^{\mc{N}_{\kappa}(i)}, a_t^{\mc{N}_{\kappa}(i)}) \nabla_{\theta^i} \log \pi^i_{\theta^i}(a^i_t | s_t^{\mc{N}_{\kappa}(i)}) \\
                    \widehat{h^i_{r^j, t}} &= \widetilde{Q}^{r^j}_t (s_t^{\mc{N}_{\kappa}(i)}, a_t^{\mc{N}_{\kappa}(j)}) \nabla_{\theta^i} \log \pi^i_{\theta^i}(a^i_t | s_t^{\mc{N}_{\kappa}(i)}), \\
                    &\qquad \text{for all } j \in \mc{N}_{\kappa}(i)
                \end{align*}
                \normalsize
            \STATE update:
                \small
                \begin{align*}
                    \theta^i_{t+1} &= \theta^i_t - \alpha_t \Big( \Big( 1 + \sum_{j \in \mc{N}_{\kappa}(i)} \nu^j \Big) \widehat{h^i_{c^i, t}} - \sum_{j \in \mc{N}_{\kappa}(i)} \eta^j \widehat{h^i_{r^j, t}} \Big) \\
                    \eta^i_{t+1} &= \eta^i_t + \beta_t \Big( \gamma_{min} - \widehat{\mu^{r^i}_t} \Big) \\
                    \nu^i_{t+1} &= \nu^i_t + \delta_t \Big( u^i - \sum_{j \in \mc{N}_{\kappa}(i)} \widehat{\mu^{c^j}_t} \Big)
                \end{align*}
                \normalsize
            \STATE $t \gets t+1$
		\ENDFOR
	\end{algorithmic}
\end{algorithm}
\begin{algorithm}
	\caption{\texttt{UPDATE\_Q}}
	\label{alg:update_q}
	\begin{algorithmic}[1]
		\STATE \textbf{Input:} $\widetilde{Q}^{f^i}_{t-1}, f^i_t, \widehat{ \mu^{f^i}_t }, s_t^{\mc{N}_{\kappa}(i)}, a_t^{\mc{N}_{\kappa}(i)}, \zeta_t$
        \STATE perform updates
            \small
            \begin{align*}
                \widetilde{Q}^{f^i}_t (s_t^{\mc{N}_{\kappa}(i)}, a_t^{\mc{N}_{\kappa}(i) } ) &= (1 - \zeta_t) \widetilde{Q}^{f^i}_{t-1}(s_t^{\mc{N}_{\kappa}(i)}, a_t^{\mc{N}_{\kappa}(i) } ) \\
                &+ \zeta_t \left( f^i_t - \widehat{ \mu^{f^i}_t } + \widetilde{Q}^{f^i}_{t-1}(s_t^{\mc{N}_{\kappa}(i)}, a_t^{\mc{N}_{\kappa}(i) } ) \right) \\
                \widetilde{Q}^{f^i}_t (s^{\mc{N}_{\kappa}(i)}, a^{\mc{N}_{\kappa}(i) } ) &= \widetilde{Q}^{f^i}_{t-1} (s^{\mc{N}_{\kappa}(i)}, a^{\mc{N}_{\kappa}(i) } ), \text{ for all } \\
                & (s^{\mc{N}_{\kappa}(i)}, a^{\mc{N}_{\kappa}(i) } ) \neq (s_t^{\mc{N}_{\kappa}(i)}, a_t^{\mc{N}_{\kappa}(i) } )
            \end{align*}
            \normalsize
        \RETURN $\widetilde{Q}^{f^i}_t$
	\end{algorithmic}
\end{algorithm}

\subsection{Sum-of-SINRs Maximization} 
The Lagrangian of problem \eqref{opt:Q_kappa} is given by
\begin{equation}
    \ms{L}_Q(\theta, \nu) = J_r(\theta) + \sum_{j \in \mc{N}} \nu^j \big( u^j - \sum_{k \in \mc{N}_{\kappa}(j)} J_{c^k}(\theta) \big), \label{eqn:L_Q}
\end{equation}
where we recall that $J_r(\theta) = \sum_{j \in \mc{N}} J_{r^j}(\theta)$. In order to solve \eqref{opt:Q_kappa}, our goal is to instead solve the problem $\max_{\theta} \min_{\nu} \ms{L}_Q(\theta, \nu)$ by alternating between gradient ascent in $\theta$ and gradient descent in $\nu$ using stochastic approximates of the gradient expressions
\begin{align}
    \nabla_{\theta^i} \ms{L}_Q(\theta, \nu) &= \nabla_{\theta^i} J_r(\theta) - \sum_{j \in \mc{N}} \nu^j \sum_{k \in \mc{N}_{\kappa}(j)} \nabla_{\theta^i} J_{c^k}(\theta) \label{eqn:L_Q_grad_1} \\
    &= \nabla_{\theta^i} J_r(\theta) - \sum_{j \in \mc{N}_{\kappa}(i)} \nu^j \nabla_{\theta^i} J_{c^i}(\theta) \label{eqn:L_Q_grad_2} \\
    \nabla_{\nu^i} \ms{L}_Q(\theta, \nu) &= u^i - \sum_{j \in \mc{N}_{\kappa}(i)} J_{c^j}(\theta). \label{eqn:L_Q_grad_3}
\end{align}
Here equation \eqref{eqn:L_Q_grad_2} follows from Assumption \ref{assum:pairwise_cost_independence} combined with our assumption that the $\kappa$-hop neighbor relation is symmetric, i.e., that $i \in \mc{N}_{\kappa}(j)$ if and only if $j \in \mc{N}_{\kappa}(i)$.

% Our goal is to perform the necessary updates in a decentralized manner, where each agent only needs information available within its local neighborhood.
%
We can now present a decentralized learning algorithm for approximately solving this problem, where each agent only needs information available within its local $\kappa$-hop neighborhood. Using the approximations provided by Theorem \ref{thm:radar_local_grads}, for suitable choice of $\kappa$ and $R$ we have
\begin{align}
    \nabla_{\theta^i} \ms{L}_Q(\theta, \nu) &\approx \widehat{ h^i_r }(\theta) - \sum_{j \in \mc{N}_{\kappa}(i) } \nu^j \widehat{ h^i_{c^i} }(\theta), \\
    \nabla_{\nu^i} \ms{L}_Q(\theta, \nu) &\approx u^i - \sum_{j \in \mc{N}_{\kappa}(i) } \widehat{ \mu^{c^j} }(\theta),
\end{align}
where $\widehat{\mu^{c^j}}(\theta) \approx J_{c^j}(\theta)$ is a suitable approximation, such as a cumulative or exponential moving average. Using these expressions, we provide the D-SP-PG scheme for solving \eqref{opt:Q_kappa} in Algorithm \ref{alg:sinr_max}.
\subsection{Power Minimization with SINR Threshold}
The Lagrangian of problem \eqref{opt:R_kappa} is given by
\begin{align}
    \ms{L}_R(\theta, \eta, \nu) &= J_c(\theta) + \sum_{j \in \mc{N}} \eta^j \left( \gamma_{min} - J_{r^j}(\theta) \right) \label{eqn:R_kappa_Lagrangian_1} \\
    &\qquad - \sum_{j \in \mc{N}} \nu^j \big( u^j - \sum_{k \in \mc{N}_{\kappa}(j)} J_{c^k}(\theta) \big), \label{eqn:R_kappa_Lagrangian_2}
\end{align}
where we recall that $J_c(\theta) = \sum_{j \in \mc{N}} J_{c^j}(\theta)$. In order to solve \eqref{opt:R_kappa}, we instead solve the saddle point problem $\min_{\theta} \max_{\eta, \nu} \ms{L}_R(\theta, \eta, \nu)$ by alternating between stochastic gradient descent in $\theta$ and ascent in $\eta$ and $\nu$. Differentiating \eqref{eqn:R_kappa_Lagrangian_1}, \eqref{eqn:R_kappa_Lagrangian_2} with respect to $\theta^i$ gives
\begin{align}
    \nabla_{\theta^i} & \ms{L}_R (\theta, \eta, \nu) = \nabla_{\theta^i} J_c(\theta) - \sum_{j \in \mc{N}} \eta^j \nabla_{\theta^i} J_{r^j}(\theta) \\
    &\qquad\qquad\qquad + \sum_{j \in \mc{N}} \nu^j \sum_{k \in \mc{N}_{\kappa}(j)} \nabla_{\theta^i} J_{c^k}(\theta) \\
    &= \nabla_{\theta^i} J_{c^i}(\theta) - \sum_{j \in \mc{N}} \eta^j \nabla_{\theta^i} J_{r^j}(\theta) + \sum_{k \in \mc{N}_{\kappa}(i)} \nu^k \nabla_{\theta^i} J_{c^i}(\theta) \\
    &= \Big( 1 + \sum_{k \in \mc{N}_{\kappa}(i)} \nu^j \Big) \nabla_{\theta^i} J_{c^i}(\theta) - \sum_{j \in \mc{N}} \eta^j \nabla_{\theta^i} J_{r^j}(\theta),
\end{align}
where the first equality holds by Assumption \ref{assum:pairwise_cost_independence} combined with our assumption that the $\kappa$-hop neighbor relation is symmetric, i.e., that $i \in \mc{N}_{\kappa}(j)$ if and only if $j \in \mc{N}_{\kappa}(i)$. Finally, differentiating with respect to $\eta^i$ and $\nu^i$ yields
\begin{align}
    \nabla_{\eta^i} \ms{L}_R(\theta, \eta, \nu) &= \gamma_{min} - J_{r^i}(\theta), \\
    \nabla_{\nu^i} \ms{L}_R(\theta, \eta, \nu) &= u^i - \sum_{j \in \mc{N}_{\kappa}(i)} J_{c^j}(\theta).
\end{align}

Our goal is again to perform the necessary updates in a decentralized manner, with each agent only using locally available information. Using the approximations provided by Theorem \ref{thm:radar_local_grads}, for suitable choice of $\kappa$ and $R$ we have
\begin{align}
    \nabla_{\theta^i} \ms{L}_R(\theta, \eta, \nu) &\approx \Big( 1 + \hspace{-2mm} \sum_{k \in \mc{N}_{\kappa}(i)} \hspace{-2mm} \nu^k \Big)\widehat{h^i_{c^i}}(\theta) -  \hspace{-2mm} \sum_{j \in \mc{N}_{\kappa}(i)} \eta^j \widehat{h^i_{r^j}}(\theta) \\
    \nabla_{\eta^i} \ms{L}_R(\theta, \eta, \nu) &\approx \gamma_{min} - \widehat{\mu^{r^i}}(\theta) \\
    \nabla_{\nu^i} \ms{L}_R(\theta, \eta, \nu) &\approx u^i - \sum_{j \in \mc{N}_{\kappa}(i)} \widehat{\mu^{c^j}}(\theta),
\end{align}
where $\widehat{\mu^{r^i}}(\theta) \approx J_{r^l}(\theta) $ and $\widehat{\mu^{c^j}}(\theta) \approx J_{c^j}(\theta)$ are suitable approximations.  Using these expressions, we provide the D-SP-PG scheme for solving \eqref{opt:R_kappa} in Algorithm \ref{alg:cost_min}.

\begin{remark}
    We note that, in order to lay the groundwork for convergence analysis of our algorithms in future work, we have presented the $Q$ updates of Algorithm \ref{alg:update_q} in the easily analyzable tabular form, which is only applicable to finite state and action spaces. We emphasize that practical variants leveraging standard neural network architectures for $Q$ function approximation can be substituted in Algorithm \ref{alg:update_q} to enable application to the continuous spaces of Section \ref{sec:formulation}.
\end{remark}

\section{Conclusion} \label{sec:conclusion}

In this work, we have considered the specific use-case of power allocation for target detection in a radar network to illustrate how signal strength decay properties inherent in wireless communications and radar networks can be used to develop decentralized, scalable MARL methods.
%
% Limitations of this work include the action-independent nature of our dynamics model, which limits the validity of our approach to problems where the radars do not affect the evolution of the system state.
%
Future directions include convergence analysis of Algorithms \ref{alg:sinr_max} and \ref{alg:cost_min}, experimental evaluation of neural network-based versions of our algorithms, extension of our approach to additional applications in communications and radar networks, and extension of our approach to joint motion planning and power allocation in target detection in mobile radar networks.

% \section*{Acknowledgments}
% \todo{}

\bibliographystyle{IEEEbib}
\bibliography{refs}

% \newpage

% \section*{MISC}

% \begin{align*}
%     \text{target} \\
%     %
%     h^{\tau}_{ii} \\
%     %
%     h^{\tau}_{ji} \\
%     %
%     h^{\tau}_{jj} \\
%     %
%     h^{\tau}_{ij} \\
%     %
%     h^d_{ij} \\
%     %
%     h^d_{ji} \\
%     %
%     \text{radar } i \\
%     %
%     \text{radar } j \\
%     %
%     \text{neighborhood } \mc{N}_{1}(i) \\
%     %
%     \text{neighborhood } \mc{N}_{2}(i) \\
%     %
%     \text{radius } R \\
%     %
%     i \\
%     %
%     j \\
%     %
%     k
% \end{align*}

\end{document}